\newcommand{\blackslug}{\hbox{\hskip 1pt
        \vrule width 4pt height 8pt depth 1.5pt\hskip 1pt}}
\newcommand{\myQED}{\hfill \blackslug}
\newcommand{\la}{\langle}
\newcommand{\ra}{\rangle}
\newenvironment{proof}
    {\pagebreak[1]{\narrower\noindent {\bf Proof:\nopagebreak}}}%
    {\myQED}
\newtheorem{lemma}{Lemma}
\begin{document}

\begin{center}
{\large \bf
Exploring Viable Algorithmic Options for \\
Learning from Demonstration (LfD): \\
A Parameterized Complexity Approach}

\vspace*{0.2in}

Todd Wareham \\
Department of Computer Science \\
Memorial University of Newfoundland \\
St.\ John's, NL Canada \\
(Email: {\tt harold@mun.ca}) \\

\vspace*{0.1in}

\today
\end{center}

\begin{quote}
{\bf Abstract}:
The key to reconciling the polynomial-time intractability of many machine learning 
tasks in the worst case with the surprising solvability of these tasks by heuristic 
algorithms in practice seems to be exploiting restrictions on real-world data sets.
One approach to investigating such restrictions is to analyze why
heuristics perform well under restrictions. A complementary 
approach would be to systematically determine under which sets of restrictions 
efficient and reliable machine learning algorithms do and do not exist. In this
paper, we show how such a systematic exploration of algorithmic options can
be done using parameterized complexity analysis, As an illustrative example, we give
the first parameterized complexity analysis of batch and incremental policy 
inference under Learning from Demonstration (LfD). 
Relative to a basic model of LfD, we show that none of our problems can be solved
efficiently either in general or relative to a number of (often simultaneous) 
restrictions on environments, demonstrations, and policies. We also give the first
known restrictions under which efficient solvability is possible and discuss the
implications of our solvability and unsolvability results for both our basic model
of LfD and more complex models of LfD used in practice.
\end{quote}

\section{Introduction}

\label{SectIntro}

In an ideal world world, one wants algorithms for machine learning tasks that
are both efficient and reliable, in the sense that the algorithms quickly compute the
correct outputs for all possible inputs of interest.
An apparent paradox of machine learning research is that while
many machine learning tasks are $NP$-hard in the worst case and hence cannot be
solved both efficiently and reliably in general,
these tasks are solvable amazingly well in
practice using heuristic algorithms \cite[page 1]{Moi19}. The resolution of this
paradox is that machine learning tasks encountered in practice are
characterized by restrictions on input data sets that allow heuristics to
perform far better than suggested by worst case analyses \cite[page 2]{Moi19}. 
One approach to exploiting these restrictions pioneered by Moitra and others
is to rigorously analyze existing heuristics operating relative to such restrictions
to explain the good performance of those heuristics in practice. This in turn
often suggests fundamentally new ways of solving machine learning tasks.

A complementary approach would be to characterize those combinations of restrictions for
which efficient and reliable algorithms for a given machine learning task do and do not
exist. This can be done using techniques from the theory of parameterized computational
complexity \cite{DF99,DF13,FG06}. If these techniques are applied systematically to all
possible subsets of a given set of plausible restrictions for the task of interest, the
resulting overview of algorithmic options for that task relative to those restrictions 
would be most useful in both deriving the best possible solutions for given task 
instances (by allowing lookup of the most appropriate algorithms relative to 
restrictions characterizing those instances) and 
productively directing research on new efficient algorithms for that task (by 
highlighting those restrictions under which such algorithms can exist).

In this paper, we will show how parameterized complexity analysis can be used to
systematically explore algorithmic options for fast and reliable machine learning. We 
will first give an overview of parameterized complexity analysis (Section 
\ref{SectPCA}). Such analyses are then demonstrated via the first parameterized 
complexity analysis of a classic machine learning task, learning from demonstration 
(LfD) \cite{AC+09,BC+08} (Section \ref{SectCS}). Our analysis is done relative a basic
model of LfD (formalized in Section \ref{SectCSForm}) based on that given in 
\cite{AC+09}, in which discrete feature-based positive and negative demonstrations are
used to infer time-independent policies specified as single-state transducers. We show 
that neither batch nor incremental LfD can be done efficiently in general (Section 
\ref{SectCSCIRes}) or under many (but not all) subsets of a given set of plausible 
restrictions on environments, demonstrations, and policies.(Section \ref{SectCSPrmRes}).
To illustrate how parameterized complexity analyses are performed, proofs of selected 
results are given in the main text; all remaining proofs are given in an appendix.
Finally, after discussing the implications of our results for both our basic model of 
LfD and LfD in practice (Section \ref{SectDisc}), we give our conclusions and some 
promising directions for future research (Section \ref{SectConc}). 

\section{Parameterized Complexity Analysis}

\label{SectPCA}

In this section, we first review of how classical types of 
computational complexity analysis such as the theory of $NP$-completeness 
\cite{GJ79} are used to show that problems are not efficiently solvable in 
general. We then give an overview of the analogous mechanisms from 
parameterized complexity theory \cite{DF99,DF13,FG06} used to show that problems 
are not efficiently solvable under restrictions. Finally, we show how parameterized
complexity analysis can be used to systematically explore efficient and reliable  
algorithmic options 
for solving a problem under restrictions and give several useful rules of thumb 
for minimizing the effort involved in carrying out such analyses.

Both classical and parameterized complexity analyses are based on
the same notion of a computational problem expressed as 
a relation between inputs and their associated outputs. Given an input
instance, a search problem asks for the associated output itself, e.g.,

\vspace*{0.1in}

\noindent
{\sc Dominating set} (search version) \\
{\em Input}: An undirected graph $G = (V, E)$. \\
{\em Output}: A minimum dominating set of $G$,
              i.e., a subset $V' \subseteq V$ of the smallest possible size
              such that for all $v \in V$, either $v \in V'$ or there is at least one
              $v' \in V'$ such that $(v, v') \in E$.

\vspace*{0.1in}

\noindent
In classical complexity analysis, an algorithm for a problem is efficient if that
algorithm runs in polynomial time---that is, the algorithm's running time is 
always upper-bounded by $n^c$ where $n$ is the size of the input and $c$ is a constant.
A problem which has a polynomial-time algorithm is \emph{polynomial-time tractable}.
Polynomial-time algorithms are preferable because their runtimes grow much more slowly 
than algorithms with non-polynomial runtimes, e.g., $2^n$, as input size increases 
and hence allow the solution of much larger inputs in practical amounts of time. 

One shows that a problem $\Pi'$ is not polynomial-time tractable by giving a
reduction from a problem $\Pi$ that is either not polynomial-time tractable or
not polynomial-time tractable unless a widely-believed conjecture such as
$P \neq NP$ \cite{For09} is false.  
A polynomial-time reduction from $\Pi$ to $\Pi'$ 
\cite{GJ79} is essentially a polynomial-time algorithm for transforming instances
of $\Pi$ into instances of $\Pi'$ such that any polynomial-time algorithm for $\Pi'$ 
can be used in conjunction with this instance transformation algorithm to create
a polynomial-time algorithm for $\Pi$. Polynomial-time intractable problems are
isolated from appropriate classes of problems using the notions of hardness and
completeness. Relative to a class of $C$ of problems,
if every problem in $C$ reduces to $\Pi$ then $\Pi$ is said to be \emph{$C$-hard};
if $\Pi$ is also in $C$ then $\Pi$ is \emph{$C$-complete}. For technical reasons, 
these reductions are typically done between decision versions of problems for which
the output is the answer to a yes/no question, e.g.,

\vspace*{0.1in}

\noindent
{\sc Dominating set} (decision version) \\
{\em Input}: An undirected graph $G = (V, E)$ and a positive integer $k$. \\
{\em Question}: Does $G$ contain a dominating set of size $k$?

\vspace*{0.1in}

\noindent
Let such a decision version of a problem $\Pi$ be denoted by $\Pi_D$. This focus on
decision problems does not cause difficulties in practice because if a decision version
$\Pi_D$ of a search problem $\Pi$ is defined such that any algorithm for $\Pi$ can be 
used to solve $\Pi_D$, then the polynomial-time intractability of $\Pi_D$ also implies 
the polynomial-time intractability of $\Pi$. Such is the case for the decision and 
search versions of {\sc Dominating set} defined above. In the case of $NP$-hard decision
problems, this intractability holds unless $P = NP$. This is encoded in the following 
useful lemma.

\begin{lemma}
If {\bf X}$_D$ is $NP$-hard then {\bf X} is not solvable in polynomial time
unless $P = NP$.
\label{LemProp1}
\end{lemma}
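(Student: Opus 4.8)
The plan is to argue by contraposition: I would show that if $\mathbf{X}$ \emph{were} solvable in polynomial time, then $P = NP$ would follow. So suppose, toward this, that there is a polynomial-time algorithm $A$ for the search problem $\mathbf{X}$.

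First I would invoke the standing relationship between a search problem and its decision version set up in the text: $\mathbf{X}_D$ is, by construction, defined so that any algorithm for $\mathbf{X}$ can be used to answer the yes/no question posed by $\mathbf{X}_D$, and with only a polynomial amount of additional work --- one runs $A$ on the given instance and then inspects its output to read off the answer. Hence from $A$ I obtain a polynomial-time algorithm $A_D$ for $\mathbf{X}_D$, so $\mathbf{X}_D$ is polynomial-time tractable, i.e.\ $\mathbf{X}_D \in P$.

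Next I would use the hypothesis that $\mathbf{X}_D$ is $NP$-hard. By the definition of $NP$-hardness, every problem $\Pi \in NP$ admits a polynomial-time reduction to $\mathbf{X}_D$. Composing any such reduction with the polynomial-time algorithm $A_D$ --- and using the standard fact that the composition of two polynomial-time procedures again runs in polynomial time, since the instance produced by the reduction has size polynomial in the size of the original instance --- yields a polynomial-time algorithm for $\Pi$. Thus $NP \subseteq P$, and combined with the trivial inclusion $P \subseteq NP$ this gives $P = NP$. Taking the contrapositive: if $P \neq NP$, then $\mathbf{X}$ is not solvable in polynomial time, which is exactly the claim.

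The argument is essentially bookkeeping with the definitions, so I do not expect a deep obstacle; the one point that needs care --- and it is precisely what the construction of $\mathbf{X}_D$ from $\mathbf{X}$ is designed to guarantee --- is making explicit that ``an algorithm for $\mathbf{X}$ can be used to solve $\mathbf{X}_D$'' preserves polynomial running time, so that $A$ being polynomial-time forces $A_D$ to be polynomial-time rather than merely computable.
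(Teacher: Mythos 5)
Your argument is correct and is exactly the reasoning the paper relies on (the lemma is stated there without a separate proof, justified by the preceding remark that any algorithm for \textbf{X} can be used to solve \textbf{X}$_D$): a polynomial-time algorithm for \textbf{X} would yield one for the $NP$-hard problem \textbf{X}$_D$, forcing $P = NP$. Your contrapositive write-up simply makes those standard details explicit, so it matches the paper's approach.
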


Parameterized problems differ from the classical search and decision problems defined 
above in that each parameterized problem has an associated set of one or more 
parameters, where a \emph{parameter} of a problem is an aspect of that problem's 
input or output. Example input and output parameters of {\sc Dominating set}$_D$ are
the maximum degree $d$ of any vertex in the given graph $G$ and the size $k$ of
the requested dominating set. Given a set $K$ of parameters relative to a
problem $\Pi$, let $\la K \ra$-$\Pi$ denote $\Pi$ parameterized relative to $K$.
For example, two parameterized problems associated with {\sc Dominating set}$_D$ 
are $\la k \ra$-{\sc Dominating set}$_D$ and 
$\la d, k \ra$-{\sc Dominating set}$_D$. 

A restriction on a problem is phrased in 
terms of restrictions on the value the corresponding parameter, and algorithm 
efficiency under restrictions is phrased in terms of fixed-parameter tractability.
A problem $\Pi$ is \emph{fixed-parameter (fp-)tractable 
relative a set of parameters $K = \{k_1, k_2, \ldots, k_m\}$} \cite{DF99,DF13},
i.e., $\langle K \rangle$-$\Pi$ is fp-tractable, if there is an algorithm 
for $\Pi$ whose running time is upper-bounded by $f(K)n^c$ for some function
$f()$ where $n$ is the problem input size and $c$ is a constant. 
Fixed-parameter tractability generalizes polynomial-time solvability by 
allowing problems to be effectively solvable in polynomial time when the 
values of the parameters in $K$ are small, e.g., $k_1, k_2 \leq 4$, and $f()$ 
is well-behaved, e.g., $1.2^{k_1 + k_2}$, such that the value of $f(K)$ is a
small constant. Hence, if a polynomial-time intractable problem 
$\Pi$ is fp-tractable relative to a well-behaved $f()$ for a parameter-set $K$
then $\Pi$ can be efficiently solved even for large inputs in which the values of the
parameters in $K$ are small.

One shows that a parameterized problem $\Pi'$ is not fixed-parameter tractable by giving
a parameterized reduction from a parameterized problem $\Pi$ that is either not 
fixed-parameter tractable or not fixed-parameter tractable unless a widely-believed 
conjecture such as $FPT \neq W[1]$ \cite{DF99,DF13} is false. A parameterized reduction from
$\langle K \rangle$-$\Pi$ to $\langle K'\rangle$-$\Pi'$ \cite{DF99} allows the instance
transformation algorithm to run in fp-time relative to $K$ and
requires for each $k' \in K'$ that there is a function $g_{k'}()$ such that
$k' = g_{k'}(K)$. Such an instance transformation algorithm can be used in conjunction
with any fixed-parameter algorithm for $\langle K'\rangle$-$\Pi'$ to create a 
fixed-parameter algorithm for
$\langle K\rangle$-$\Pi$. Hardness and completeness for parameterized reductions is
typically done relative to classes in the $W$-hierarchy $= \{W[1], W[2], \ldots, 
W[P], \ldots\}$ \cite{DF99,DF13}. Once again, for technical reasons, reductions 
are typically done between decision versions of parameterized problems, and as
any algorithm for a search version of a parameterized problem can solve the
appropriately-defined decision version, we have the following parameterized 
analogue of Lemma \ref{LemProp1}.

\begin{lemma}
Given a parameter-set $K$ for problem {\bf X}, if $\la K \ra$-{\bf X}$_D$
is $W[1]$-hard then $\la K \ra$-{\bf X} is not fp-tractable unless $FPT = W[1]$.
\label{LemProp2a}
\end{lemma}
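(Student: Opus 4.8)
The plan is to prove Lemma \ref{LemProp2a} by a direct contrapositive argument that mirrors the structure of Lemma \ref{LemProp1}, using the standard machinery of parameterized reductions reviewed above. First I would assume, for contradiction, that $\la K \ra$-{\bf X} is fp-tractable, so there is an algorithm $A$ solving the search version of {\bf X} in time $f(K)n^c$ for some function $f$ and constant $c$. The first step is to observe that the decision version {\bf X}$_D$ has been set up (as stated in the excerpt for the problems under consideration) so that any algorithm for the search version {\bf X} can be used to answer the yes/no question of {\bf X}$_D$: one runs $A$ to produce the output object and then checks in polynomial time whether it witnesses a ``yes'' instance. Crucially, this wrapper does not touch the parameters, so it runs in time $f(K)n^{c'}$ for a possibly larger constant $c'$, i.e.\ it is an fp-algorithm for $\la K \ra$-{\bf X}$_D$ relative to the \emph{same} parameter-set $K$.

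Next I would invoke the $W[1]$-hardness hypothesis: since $\la K \ra$-{\bf X}$_D$ is $W[1]$-hard, every problem in $W[1]$ parameterized-reduces to it. Pick any $W[1]$-complete problem $\la K^* \ra$-$\Pi^*$ (e.g.\ $\la k \ra$-{\sc Clique}). Compose the parameterized reduction from $\la K^* \ra$-$\Pi^*$ to $\la K \ra$-{\bf X}$_D$ with the fp-algorithm for the latter obtained in the previous step. Here I would spell out why the composition is again an fp-algorithm: the instance-transformation algorithm of the reduction runs in time $h(K^*)n^d$ and produces an instance of size at most $h(K^*)n^d$ whose parameters satisfy $k = g_k(K^*)$ for each $k \in K$; feeding this into the $f(K)\,(\cdot)^{c'}$-time decision algorithm yields total running time bounded by $f(g(K^*))\,h(K^*)^{c'}\,n^{dc'} + h(K^*)n^d$, which is of the form $f'(K^*)n^{c''}$. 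Hence $\la K^* \ra$-$\Pi^*$ would be fp-tractable, placing a $W[1]$-complete problem in $FPT$ and forcing $FPT = W[1]$.

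The main obstacle — really the only place where care is needed — is the bookkeeping in the composition step: one must check that the class of functions ``$f(K)$ times a polynomial in $n$'' is closed under the substitutions $K \mapsto g(K^*)$ and under multiplying the input size by $h(K^*)$, so that the composed running time genuinely has the fp-form with the original parameter-set of the source problem. This is routine but is the crux of why parameterized reductions transport fp-tractability, so I would state it explicitly rather than gesture at it. Everything else is a transcription of the classical argument behind Lemma \ref{LemProp1} into the parameterized setting, with ``polynomial time'' replaced by ``fp-time'' and ``$P = NP$'' replaced by ``$FPT = W[1]$''. I would close by noting that the decision-to-search implication used in the first step is exactly the parameterized analogue of the remark preceding Lemma \ref{LemProp1}, and that it applies to all the LfD problems studied later because their decision versions are defined to be solvable by any algorithm for the corresponding search version.
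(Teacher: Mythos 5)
Your argument is correct and is exactly the standard reasoning the paper relies on: it states Lemma~\ref{LemProp2a} without a separate proof, appealing only to the remark that any search algorithm solves the appropriately-defined decision version, which combined with $W[1]$-hardness and the composition property of parameterized reductions gives the result. Your write-up simply makes that implicit argument (including the fp-time bookkeeping) explicit, so it matches the paper's approach.
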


\noindent
In certain situations, one can get a more powerful result.

\begin{lemma}
\cite[Lemma 2.1.35]{War99}
Given a parameter-set $K$ for problem {\bf X}, if {\bf X}$_D$
is $NP$-hard when the value of every parameter $k \in K$ is fixed to a
constant value, then $\la K \ra$-{\bf X} is not fp-tractable unless $P = NP$.
\label{LemProp2}
\end{lemma}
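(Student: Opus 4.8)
The plan is to show that an fp-tractable algorithm for $\la K \ra$-{\bf X}$_D$ would, under the stated hypothesis, yield a polynomial-time algorithm for an $NP$-hard problem, forcing $P = NP$. Suppose $\la K \ra$-{\bf X} is fp-tractable, so there is an algorithm $A$ solving {\bf X} (and hence {\bf X}$_D$, by the assumed relationship between the search and decision versions) in time bounded by $f(K)\, n^c$ for some function $f()$ and constant $c$, where $n$ is the input size. Let $K = \{k_1, \ldots, k_m\}$, and let $c_1, \ldots, c_m$ be the constant values to which the parameters are fixed in the $NP$-hardness hypothesis; write {\bf X}$_D^{\,\vec{c}}$ for the restriction of {\bf X}$_D$ to instances in which $k_i = c_i$ for all $i$. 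By hypothesis, {\bf X}$_D^{\,\vec{c}}$ is $NP$-hard.

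First I would observe that running $A$ on any instance of {\bf X}$_D^{\,\vec{c}}$ takes time at most $f(c_1, \ldots, c_m)\, n^c$. Since $c_1, \ldots, c_m$ are fixed constants, $f(c_1, \ldots, c_m)$ is itself a fixed constant, call it $\gamma$; hence $A$ runs in time $\gamma\, n^c$, which is polynomial in $n$. Thus $A$ is a polynomial-time algorithm deciding {\bf X}$_D^{\,\vec{c}}$. Combined with the $NP$-hardness of {\bf X}$_D^{\,\vec{c}}$, this gives $P = NP$ by the standard argument (composing the polynomial-time reduction witnessing $NP$-hardness with $A$). Therefore, if $P \neq NP$, no such $A$ exists and $\la K \ra$-{\bf X} is not fp-tractable. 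This is essentially \cite[Lemma 2.1.35]{War99}, restated here for the parameter-sets we use.

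There is no real obstacle here — the argument is a short unwinding of definitions — but the one point that needs care is making sure {\bf X}$_D^{\,\vec{c}}$ is genuinely a subproblem of {\bf X}$_D$ on which $A$ is correct: the fp-algorithm $A$ is guaranteed correct on \emph{all} instances, so in particular on those with the parameters pinned to $\vec{c}$, and its claimed running-time bound $f(K)\, n^c$ applies with $f(K) = f(\vec{c}) = \gamma$. One should also note the implicit requirement, inherited from the setup preceding Lemma \ref{LemProp2a}, that the decision version {\bf X}$_D$ is defined so that any algorithm for the search version {\bf X} solves it; this is what lets us pass from an fp-algorithm for $\la K \ra$-{\bf X} to one for $\la K \ra$-{\bf X}$_D$ and then specialize.
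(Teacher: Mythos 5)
Your argument is correct and is precisely the standard unwinding of definitions behind the cited result: the paper itself gives no proof for Lemma \ref{LemProp2} (it is imported from \cite[Lemma 2.1.35]{War99}), and your specialization of the fp-algorithm's $f(K)\,n^c$ bound to the constant-parameter slice, composed with the $NP$-hardness reduction, is exactly the intended reasoning. Your closing remarks on correctness of the algorithm on the slice and on the search-to-decision link are the right points of care, so nothing further is needed.
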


We can now finally talk about how the results of a parameterized complexity 
analysis for a problem can be used to derive an intractability map \cite{War99}, 
which corresponds to the desired systematic overview of algorithmic options for 
solving that problem described in Section \ref{SectIntro}. Given a set $P$ of
parameters of a problem $\Pi$, an intractability map describes the parameterized
complexity status of $\Pi$ relative to each of the $2^{|P|} - 1$ non-empty
subsets of $P$. The choice of $P$ depends on how one wants to use the map. If one
wishes to use the map as a probe to examine the effects of various parameters
on the computational complexity of our problem of interest (as we do in our 
parameterized complexity analysis of learning from demonstration in Section 
\ref{SectCS}), $P$ should consist of parameters (which need not all be of small 
value in practice) characterizing all aspects of the input and output of 
that problem.  If on the other hand one wishes to use the map as a guide to
either developing algorithms or selecting the most appropriate algorithms for 
input instances of that problem that are encountered in practice, $P$ should
consist purely of aspects of the problem that are known to be small in at least
some of these instances. 

It is important to note that the initial algorithms used to construct an 
intractability map need not have practical runtimes---at this stage in analysis, one 
need only establish the fact and not the best possible degree of fp-tractability. The 
best possible fixed-parameter algorithms are developed subsequently as
needed. There are a number of techniques for deriving fixed-parameter
algorithms \cite{CF+15,FL+19,Nie06}, and it has been observed multiple times 
within the parameterized complexity community that once fp-tractability is 
established, these techniques are applied by different groups of researchers in 
``FPT Races'' to produce increasingly (and, on occasion, spectacularly) more 
efficient algorithms \cite{KN12,Ste12}. 

\begin{figure}[t]
\begin{center}
\begin{tabular}{| c || c | c || c | c | c || c | c | c | c |}
\cline{1-4} \cline{6-10}
    & R1 & R2 & {\bf R3} & &
    & -- & $C$ & $D$ & $C,D$ \\
\cline{1-4} \cline{6-10}
\cline{1-4} \cline{6-10}
$A$ & @ & -- & {\bf @} & &
-- & NPh & X & X & X \\
\cline{1-4} \cline{6-10}
$B$ & -- & 4 & {\bf @}& {\LARGE $\Rightarrow$} &
$A$ & X & X & X & X$^{R1}$ \\
\cline{1-4} \cline{6-10}
$C$ & @ & @ & -- & &
$B$ & X & X$^{R2}$ & ??? & ??? \\
\cline{1-4} \cline{6-10}
$D$ & 2 & -- & -- & &
$A,B$ & $\surd^{R3}$ & $\surd$ & $\surd$ & $\surd$ \\
\cline{1-4} \cline{6-10}
\multicolumn{10}{c}{ } \\
\multicolumn{4}{c}{(a)} & \multicolumn{1}{c}{ } & \multicolumn{5}{c}{(b)} \\
\end{tabular}
\end{center}
\caption{Parameterized complexity analysis and intractability maps (Adapted from 
          Figure 1 in \cite{WarSub}). (a) A set of parameterized intractability 
          (R1, R2) and tractability (R3) results for a problem $\Pi$ relative to 
          parameter-set (A, B, C, D). (b) An intractability map
          derived from the results in (a). See main text for explanation. 
        }
\label{FigIMap} 
\end{figure}

An example derivation of an intractability map for a hypothetical problem $\Pi$ with parameter-set
$\{A, B, C, D\}$ is given in Figure \ref{FigIMap}. Part (a) of this figure describes a
set of parameterized intractability (R1, R2) and tractability (R3) results for $\Pi$;
note that tractability results are highlighted by boldfacing.
Each column in this table describes a result which holds relative to the parameter-set 
consisting of all parameters with a $@$-symbol in that column. If in addition a result 
holds when a particular parameter has a constant value $c$, that is indicated by $c$ 
replacing $@$ for that parameter in that result's column. Part (b) gives the 
intractability map associated with the results in part (a). Each cell in this map 
denotes the parameterized status of $\Pi$ (X for fp-intractability, $\surd$ for 
fp-tractability) relative to by the union of the sets of parameters labelling that 
cell's column and row. The ``raw'' results from the table in part (a) are denoted by 
superscripted entries ($X^{R1}$, $X^{R2}$, $\surd^{R3}$) and all other $X$ and $\surd$ 
results in the map follow from these observations:

\begin{lemma}
\cite[Lemma 2.1.30]{War99}
If problem $\Pi$ is fp-tractable relative to \newline parameter-set $K$ then $\Pi$ is
       fp-tractable for any parameter-set $K'$ such that $K \subset K'$.
\label{LemPrmProp1}
\end{lemma}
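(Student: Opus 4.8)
The plan is to observe that fixed-parameter tractability is monotone under \emph{enlarging} the parameter-set, essentially because any function of the parameters in $K$ is \emph{a fortiori} a function of the larger collection of parameters in $K'$. Concretely, suppose $\la K\ra$-$\Pi$ is fp-tractable, as witnessed by an algorithm $\mathcal{A}$ that on any input of size $n$ runs in time at most $f(K)\,n^c$ for some function $f()$ and some constant $c$. I would use this very same algorithm $\mathcal{A}$ as the witness for the fp-tractability of $\la K'\ra$-$\Pi$; no modification of $\mathcal{A}$, and in particular no new reduction or hardness assumption, is required.

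The one step that needs to be stated carefully is the construction of an appropriate bounding function for the enlarged parameter-set $K'$. Since $K \subset K'$, every parameter in $K$ is also a parameter in $K'$, so on any instance the values of the parameters in $K$ are fully determined by (indeed, are a sub-tuple of) the values of the parameters in $K'$. I would therefore define a function $f'()$ on $K'$ by setting $f'(K') = f(K)$, i.e.\ $f'$ reads off the values of those parameters of $K'$ that belong to $K$ and simply ignores the remaining parameters in $K' \setminus K$. With this definition, the running time of $\mathcal{A}$ on an input of size $n$ is bounded by $f(K)\,n^c = f'(K')\,n^c$, which is exactly the form required for fp-tractability relative to $K'$; the constant $c$ is unchanged. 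Hence $\la K'\ra$-$\Pi$ is fp-tractable, as claimed.

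I do not expect any genuine obstacle here: the result is purely definitional, and the only thing to be careful about is the routine bookkeeping that a function of a sub-tuple of parameters can always be regarded as a function of the full tuple. It is worth noting, for later use in constructing intractability maps, that the contrapositive of this lemma propagates fp-\emph{intractability} downward from $K'$ to every subset $K \subset K'$, which is the direction exploited when an $X$ entry in Figure \ref{FigIMap} forces $X$ entries in cells indexed by smaller parameter-sets.
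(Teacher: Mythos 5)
Your proof is correct and is exactly the standard definitional argument behind this lemma: the paper itself states it without proof (citing \cite[Lemma 2.1.30]{War99}), and the cited proof is the same observation that the original $f(K)\,n^c$-time algorithm witnesses fp-tractability for $K'$ once $f$ is viewed as a function of the larger parameter tuple that ignores the parameters in $K' \setminus K$. Your closing remark about the contrapositive is also consistent with how the paper uses it (it is precisely Lemma \ref{LemPrmProp2}).
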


\begin{lemma}
\cite[Lemma 2.1.31]{War99}
If problem $\Pi$ is fp-intractable relative to \newline parameter-set $K$ then $\Pi$ is
       fp-intractable for any parameter-set $K'$ such that $K' \subset K$.
\label{LemPrmProp2}
\end{lemma}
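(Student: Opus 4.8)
The plan is to obtain this as the contrapositive of Lemma~\ref{LemPrmProp1}. First I would fix notation: suppose $\Pi$ is fp-intractable relative to $K$, and let $K' \subset K$ be arbitrary; I must show $\Pi$ is fp-intractable relative to $K'$. Assume for contradiction that $\Pi$ is instead fp-tractable relative to $K'$. Since $K' \subset K$, Lemma~\ref{LemPrmProp1} applies with its ``$K$'' instantiated as $K'$ and its ``$K'$'' instantiated as $K$, and yields that $\Pi$ is fp-tractable relative to $K$. This contradicts the hypothesis, so no such $K'$ can admit a fixed-parameter algorithm, which is the claim.

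To keep the argument self-contained I would also record the one-line direct version that underlies Lemma~\ref{LemPrmProp1}. By the definition of fp-tractability given in Section~\ref{SectPCA}, fp-tractability of $\Pi$ relative to $K' = \{k_1, \ldots, k_j\}$ means there is an algorithm for $\Pi$ running in time $f(k_1, \ldots, k_j)\, n^c$ for some function $f()$ and constant $c$. Writing $K = K' \cup \{k_{j+1}, \ldots, k_m\}$ and defining $\hat{f}(k_1, \ldots, k_m) = f(k_1, \ldots, k_j)$---a perfectly legitimate function of all of $K$ that simply ignores its extra arguments---the very same algorithm runs in time $\hat{f}(K)\, n^c$, so $\Pi$ is fp-tractable relative to $K$. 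Taking the contrapositive of this implication gives the lemma directly.

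The only point needing care is bookkeeping about what ``fp-intractable relative to $K$'' denotes: it is precisely the negation of ``fp-tractable relative to $K$'' in the sense of the Section~\ref{SectPCA} definition, so the proof-by-contradiction step is valid. There is no genuine obstacle here---the result is the formal counterpart of the intuition that handing an algorithm more parameter values (equivalently, more promises about the input) can only help, so withdrawing such promises can only hurt. Together with Lemma~\ref{LemPrmProp1}, this is exactly the monotonicity that licenses propagating the ``raw'' $X$ and $\surd$ entries throughout the intractability map in Figure~\ref{FigIMap}, and I would note that connection at the end of the proof.

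\myQED
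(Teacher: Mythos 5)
Your proposal is correct: the paper itself gives no proof of this lemma (it is cited from \cite[Lemma 2.1.31]{War99}), and your argument---taking the contrapositive of Lemma~\ref{LemPrmProp1}, with the underlying padding argument that a function of $K'$ is trivially a function of the larger set $K$---is exactly the standard proof that the cited reference relies on. Your closing remark is also handled properly: even when ``fp-intractable'' is read conditionally (e.g., unless $FPT = W[1]$ or $P = NP$), the contradiction step simply carries the same conjecture through, so the argument is sound in the sense in which the paper uses the lemma.
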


\noindent
The remaining ???-entries correspond to parameter-sets 
whose parameterized complexity is not specified or implied in the given results. 
As there are ???-entries in the map in part (b), this map is a partial
intractability map.

The effort involved in constructing an intractability map can be 
reduced (in some cases, dramatically) by applying the following two rules of thumb.
First, prove the initial polynomial-time intractability of the problem of interest
using reductions from problems like {\sc Dominating set} whose parameterized 
versions are known to be fp-intractable; this often allows such reductions to be 
re-used in the subsequent derivation of parameterized results. Second, in order to
exploit Lemmas \ref{LemPrmProp1} and \ref{LemPrmProp2} to maximum effect when 
filling in the intractability map, fp-intractability results should be proved 
relative to the largest possible sets of parameters and fp-tractability results 
should be proved relative to the smallest possible sets of parameters. Both of 
these rules are used to good effect in the parameterized complexity analysis of 
learning from demonstration given in the next section.

\section{Case Study: Learning from demonstration (LfD)}

\label{SectCS}

Learning from demonstration (LfD) \cite{AC+09,BC+08} is a popular approach for deriving
policies that specify what action should be performed next given the current state of 
the environment. In LfD, a policy is derived from a set of one or more demonstrations, 
each of which is
a sequence of one or more of environment-state / action pairs. LfD can be
used by itself or as a generator of initial policies that are optimized by
techniques like reinforcement learning \cite[Sections 5.1 and 5.2]{HG+17}.

A number of
algorithms and systems implementing LfD have been proposed over the last 35 years 
(see Section 59.2 of \cite{BC+08} and Section 4 of \cite{HG+17}). Some of 
these systems operate in ``batch mode'' \cite[Page 471]{AC+09}, i.e., a policy 
is derived from a given set of demonstrations, while others are incremental
\cite[Section 59.3.2]{BC+08}, i.e., a policy derived relative to a set of 
previously-encountered demonstrations (which may or may not still be available) is
modified to take into account a new demonstration. Fast learning relative to few
demonstrations is often desirable \cite[Page 475]{AC+09} and in some situations 
necessary \cite[Page 5]{HG+17}. However, it is not known if existing (or indeed any) 
LfD algorithms can perform fast and reliable learning or, if not, under 
which restrictions such learning is possible.

In this section, we shall illustrate how the classical and parameterized complexity 
analysis techniques described in Section \ref{SectPCA} can be applied to answer these
questions relative to basic formalizations of batch and incremental LfD relative
to memoryless reactive policies given in Section \ref{SectCSForm}. We 
prove that all of these problems are polynomial-time intractable (and inapproximable)
in general (Section \ref{SectCSCIRes}) and remain fixed-parameter intractable under
a number of (often simultaneous) restrictions (Section \ref{SectCSPrmRes}). 
The implications of all of our results for both for the basic conception of
LfD examined here and LfD in practice are then discussed in Section \ref{SectDisc}.

\subsection{Formalizing LfD}

\label{SectCSForm}

In order to perform our computational complexity analyses, we must first formalize the 
following entities and properties associated with learning from demonstration:

\begin{itemize}
\item Sensed environmental features and environmental states;
\item Demonstrations of activities to be learned;
\item Policies describing actions taken by robots in particular situations;
\item What it means for a policy to correctly describe and hence be consistent
       with a given demonstration; and
\item What it means for a policy $p$ be be behaviorally equivalent to and hence
       consistent with another policy $p'$ that has been derived from $p$.
\end{itemize}

\noindent
We will then formalize problems corresponding to batch and incremental versions of LfD.
As part of our formalization process, we shall
also discuss how our formalizations compare with those given to date in the 
literature.

We first formalize the basic entities associated with LfD:

\begin{itemize}
\item{
{\em Sensed environmental features and environmental states}:
Let $F = \{f_1. f_2, \ldots,$ \linebreak $f_{|F|}\}$ be a set of features that a robot 
can sense
in its environment. A state $s \in 2^{F}$ of the environment is represented by the
subset of sensed features that characterize that state. Our features can be viewed as
special cases of both Boolean-predicate and other multi-valued features 
\cite[Section 3]{HG+17} in which the presence of a feature in a state corresponds to that
feature being true or having a particular feature-value, respectively. 
An example feature-set $F = \{f_1, f_2, f_3, f_4\}$ is given in part (a) of
Figure \ref{FigLfDEx}.
}
\begin{figure}[p]
\begin{center}
\large
\begin{tabular}{l p{0.6in} l}
$f_1 =$ weather is raining & & $a_1 =$ wear a raincoat \\
$f_2 =$ weather is cold    & & $a_2 =$ wear a sweater \\
$f_3 =$ weather is sunny   & & $a_3 =$ wear sunglasses \\
$f_4 =$ weather is windy   & & $a_4 =$ wear a wingsuit \\
 & & \\
\multicolumn{1}{c}{(a)} & & \multicolumn{1}{c}{(b)} \\
 & & \\ & & \\
\end{tabular}
\begin{tabular}{l}
$d_1 = ( pos, \la (\{f_1\}, a_1), (\{f_2\}, a_2), (\{f_3\}, a_3), (\{f_4\}, a_4)\ra)$  \\
$d_2 = (neg, \la (\{f_1\}, a_2), (\{f_3\}, a_1)\ra)$ \\
$d_3 = (pos, \la (\{f_1, f_4\}, a_1), (\{f_2, f_4\}, a_4), (\{f_3\}, a_2)\ra)$ \\
$d_4 = (pos, \la (\{f_2\}, a_2), (\{f_3\}, a_3)\ra)$ \\
\\
\multicolumn{1}{c}{(c)} \\
\\
\\
\end{tabular}
\begin{tabular}{l}
$p_1: T_1 = \{(\{f_1\}, a_1), (\{f_2\}, a_2), (\{f_3\}, a_3), (\{f_4\}, a_4)\}$ \\
$p_2: T_2 = \{(\{f_1\}, a_1), (\{f_2\}, a_4), (\{f_3\}, a_2)\}$ \\
\\
\multicolumn{1}{c}{(d)} \\
\end{tabular}
\end{center}
\caption{Examples of basic LfD entities in our model. a) A set $F = \{f_1, f_2, f_3, 
          f_4\}$ of sensed environmental features. b) a set $A = \{a_1, a_2, a_3, a_4\}$
          of actions. c) A set $D = \{d_1, d_2, d_3, d_4\}$ of demonstrations.
          d) Two policies $p_1$ and $p_2$ based on transition-sets $t_1$ and $t_2$,
          respectively.}
\label{FigLfDEx}
\end{figure}

\item{
{\em Demonstrations}:
A demonstration $d = (type, \la (s_1, a_1), (s_2, a_2), \ldots, (s_{|d|}, a_{|d|})\ra)$ 
consists of a demonstration-type $type \in \{pos, neg\}$ and a sequence of one or more
environment-state / action pairs whose actions are drawn from an action-set $A$. If $d$
is of type $pos$, $d$ is a positive demonstration; otherwise, $d$ is a negative 
demonstration. As such, our demonstrations are based on state-spaces and actions that 
are discrete \cite[Section 4.1]{AC+09}. Our positive demonstrations are as standardly 
defined for discrete actions \cite{AC+09}; however, our negative demonstrations are 
special cases of the negative demonstrations in \cite{ABV07,NM03} as our 
negative demonstrations forbid all state / action pairs in those demonstrations rather 
than specific state / action pairs in a demonstration sequence. A set of demonstrations
$D = \{d_1, d_2, d_3, d_4\}$ based on feature-set $F$ and action-set $A$ given in
parts (a) and (b) of Figure \ref{FigLfDEx}, respectively, is given in part (c) of
Figure \ref{FigLfDEx}.
}
\item{
{\em Policies}:
We will consider here the simplest possible type of reactive mapping-function policy 
\cite[Section 4,1]{AC+09} stated in terms of
a single-state transducer consisting of a state $q$ and a transition-set 
$T = ((F^t_1,a_1), (F^t_2,a_2), \ldots, (F^t_{|T|},a_{|T|}))$ where the action $a_i$
in each transition is drawn from an action-set $A$. Given an environment-state $s \in 
2^F$, we say that a transition $(F^t,a)$ triggers on $s$ if $F^t \subseteq s$. Our 
transition-triggering feature-sets are special cases of transition-triggering patterns 
encoded as Boolean formulas over environmental features \cite[e.g.,]{DL84} in which our
feature-sets correspond to patterns composed of AND-ed sets of features. As our policies
are not dependent on time, they are stationary (autonomous) \cite[Section 2]{HG+17}. Two
example policies $p_1$ and $p_2$ based on the feature-set $F$ and action-set $A$ given
in parts (a) and (b) of Figure \ref{FigLfDEx}, respectively, are given in part (d) of
Figure \ref{FigLfDEx}.

The behaviour of our policies should also be simple. To this end,
we adopt a conservative notion of generalization to previously unobserved 
environmental states, in that a policy $p$ is not defined and hence does not
produce an action for any state $s$ for which there is no transition $(F^t, a)$ in $p$
such that $F^t \subseteq s$. We also adopt a conservative notion of policy
determinism, in that a policy is only guaranteed to be deterministic and produce a
single action relative to states encoded in a demonstration-set $D$. More formally, the
transition-set $T$ in any $p$ relative to a demonstration-set $D$ is such that for each
state $s$ in a demonstration in $D$, all transitions in $T$ that trigger on $s$ 
produce the same action $a$. Such a policy $p$ is said to be 
{\em valid} relative to $D$. 

In some situations it will be useful to derive one policy from another. Given two 
policies $p$ and $p'$, we say that $p'$ is derivable from $p$ by at most $c$ changes if
at most $c$ modifications drawn from the set $\{${\em substitute new transition 
feature-set, substitute new transition action, delete transition}$\}$ are required to 
transform $p$ into $p'$.
}
\end{itemize}

We now formalize the following properties associated with LfD:

\begin{itemize}
\item{
{\em Policy-demonstration consistency}:
Given a policy $p$ and a demonstration $d$, $p$ is consistent with $d$ if,
starting from state $q$ and environment-state $s_1$, either 

\begin{enumerate}
\item for each of the state-action-pairs $(s, a)$ in $d$, $p$ produces $a$ when
       run on $s$ (if $d$ is a positive demonstration) or
\item for each of the state-action-pairs $(s, a)$ in $d$, $p$ does not produce $a$ when
       run on $s$ (if $d$ is a negative demonstration).
\end{enumerate}

\noindent
A policy $p$ is in turn consistent with a demonstration-set $D$ if $p$ is consistent
with each demonstration $d \in D$.
Consistency of a policy with a positive demonstration is as standardly defined
\cite{AC+09,HG+17}. Consistency of a policy with a negative demonstration is a special
case of such consistency as defined in \cite{ABV07,NM03} and is in line 
with our special-case definition of negative demonstrations given above.
}
\item{
{\em Policy-policy consistency}: 
This has not to our knowledge been previously defined in the literature; however,
the following version will be of use in defining and analyzing incremental LfD
when previously-learned demonstrations are not available.
Given two policies $p$ and $p'$ and a demonstration $d$, $p$ is consistent 
with $p'$ modulo $d$ if

\begin{enumerate}
\item for each $s$ that is the triggering feature-set of some transition in $p'$ such 
       that $s \not\subseteq s'$ for some state / action pair $(s',a)$ in $d$, $p$ and 
       $p'$ produce the same action when run on $s$; and
\item{for each $s$ in some state / action pair $(s,a)$ in $d$, either

\begin{enumerate}
\item $p$ produces action $a$ when run on $s$ (if $d$ is a positive demonstration) or
\item $p$ either (1) produces action $a'$ when run on $s$ if $p'$ produces action $a' 
       \neq a$ when run on $s$ or (2) $p$ does not produce an action for $s$ (if $d$
       is a negative demonstration).
\end{enumerate}
}
\end{enumerate}

\noindent
The focus above exclusively on the feature-sets in the transitions of $p'$ may 
initially seem strange. However, as any triggering feature-set of a transition 
$t$ in $p$ that mimics the behaviour of a transition $t'$ in $p'$ must have a 
triggering feature-set that is equal to or a subset of the triggering 
feature-set in $t'$, any such consistent $p$ will (modulo
the  behaviors requested or forbidden by $d$) replicate the behaviour of $p'$.
}
\end{itemize}

\noindent
Relative to these property formalizations, the following statements are true for
the examples of LfD entities given in Figure \ref{FigLfDEx}:

\begin{itemize}
\item {\em $p_1$ is valid for $\{d_1, d_2\}$}: This is so because for each 
       environment-state $s$ in $d_1$ and $d_2$, $p_1$ produces at most (and in
       all cases exactly) one action.
\item {\em $p_1$ is not valid for $\{d_3\}$}: This is so because $p_1$ produces
       action-sets $\{a_1, a_4\}$ and $\{a_2, a_4\}$ for environment-states
       $\{f_1, f_4\}$ and $\{f_2, f_4\}$ in $d_3$, respectively.
\item {\em $p_2$ is valid for $\{d_1, d_2\}$}: This is so because for each 
       environment-state $s$ in $d_1$ and $d_2$, $p_2$ produces at most one action.
\item {\em $p_2$ is valid for $\{d_3\}$}: This is so because for each 
       environment-state $s$ in $d_3$, $p_2$ produces at most one action.
\item {\em $p_2$ is derivable from $p_1$ by at most 3 changes}: If the transitions
       of $p_1$ in $T_1$ are numbered 1--4 as they appear in $T_1$, this is so
       by substituting transition action $a_4$ for $a_2$ in $t_2$, substituting 
       transition-action $a_2$ for $a_3$ in $t_3$, and deleting transition $t_4$.
\item {\em $p_1$ is not derivable from $p_2$ by any number of changes}: This is so
       because $p_1$ has more transitions than $p_2$ and adding transitions is not an 
       allowed change.
\item {\em $p_1$ is consistent with $\{d_1, d_2\}$}: This is because $p_1$
       produces all requested actions when run on the environment-states in $d_1$
       and none of the specified actions when run on the environment-states in $d_2$.
\item {\em $p_1$ is not consistent with $\{d_3\}$}: This is because $p_1$ produces
       two-action instead of one-action sets for the environment-states
       $\{f_1, f_4\}$ and $\{f_2, f_4\}$ in $d_3$ and the wrong action for 
       environment-state $\{f_3\}$ in $d_3$.
\item {\em $p_2$ is not consistent with $\{d_1, d_2\}$}: This is so because 
       $p_2$ produces the wrong actions for environment-states $\{f_2\}$, $\{f_3\}$,
       and $\{f_4\}$ in $d_1$ (with the last of these actually producing no action
       at all). Note that $p_2$ is, however, consistent with $\{d_2\}$.
\item {\em $p_2$ is consistent with $\{d_3\}$}: This is because $p_2$
       produces all requested actions when run on the environment-states in $d_3$.
\item {\em $p_1$ is consistent with $p_2$ modulo $\{d_4\}$}: This is so because
       $p_1$ and $p_2$ produce the same actions for all transition trigger-sets in
       $p_2$ that are not subsets (proper or otherwise) of environment-states in
       $d_4$ (namely, $\{f_1\}$) and for all other environment-states in
       $d_4$ (namely $\{f_2\}$ and $\{f_3\}$), $p_1$ produces the action requested
       in $d_4$.
\item {\em $p_2$ is not consistent with $p_1$ modulo $\{d_4\}$}: This is so because
       $p_2$ does not produce the same action as $p_1$ when run on the transition
       trigger-set $\{f_4\}$ in $p_1$.
\end{itemize}

Given all of the above, we can formalize the LfD problems 
that we will analyze in the remainder of this paper:

\newpage

\noindent
{\sc Batch Learning from Demonstration} (LfDBat) \\
{\em Input}: A set $D$ of demonstrations based on a feature-set $F$ and an action-set
              $A$ and positive non-zero integers $t$ and $f_t$. \\
{\em Output}: A policy $p$ valid for and consistent with $D$ such that there are at most
               $t$ transitions in $p$ and each transition is triggered by a set of at 
	       most $f_t$ features, if such a $p$ exists, and special symbol $\bot$ 
	       otherwise.

\vspace*{0.1in}

\noindent
{\sc Incremental Learning from Demonstration with History} (LfDIncHist) \\
{\em Input}: A set $D$ of demonstrations based on a feature-set $F$ and an action-set 
              $A$, a policy $p$ that is valid for and consistent with $D$,
              a demonstration $d_{new}$ based on $F$ and $A$ such that 
              $d_{new} \not\in D$, and positive non-zero integers $t$, $f_t$, and $c$. \\
{\em Output}: A policy $p'$ derivable from $p$ by at most $c$ changes that 
               is valid for and consistent with $D \cup \{d_{new}\}$ such that there are
               at most $t$ transitions in $p'$ and each transition is triggered by a
	       set of at most $f_t$ features, if such a $p'$ exists, and special symbol 
	       $\bot$ otherwise.

\vspace*{0.1in}

\noindent
{\sc Incremental Learning from Demonstration without History} (LfDIncNoHist) \\
{\em Input}: A policy $p$ based on a feature-set $F$ and an action-set $A$,
              a demonstration $d_{new}$ based on $F$ and $A$, and
              positive non-zero integers $t$, $f_t$, and $c$. \\
{\em Output}: A policy $p'$ derivable from $p$ by at most $c$ changes that 
               is consistent with $p$ modulo $d_{new}$ such that there are at most $t$ 
               transitions in $p'$ and each transition in $p'$ is triggered by
	       a set of at most $f_t$ features, if such a $p'$ exists, and special 
	       symbol $\bot$ otherwise.

\vspace*{0.1in}

\noindent
Let LfDIncHist$^{pos}$ and LfDIncHist$^{neg}$ (LfDIncNoHist$^{pos}$ and 
LfDIncNoHist$^{neg}$) denote the versions of LfDIncHist (LfDIncNoHist)
in which $d_{new}$ is a positive and negative demonstration, respectively;
furthermore, let LfDBat$_D$, LfDIncHist$^{pos}_D$, LfDIncHist$^{neg}_D$,
\linebreak
LfDIncNoHist$^{pos}_D$, and LfDIncNoHist$^{neg}_D$) denote the decision versions
of the problem above which ask if the requested policy exists. Some readers may be
disconcerted that we have incorporated explicit limits on the size and structure of
the requested policies. This is useful in practice for applications in which LfD 
must be done with limited computer memory \cite[Page 5]{HG+17}. This is also 
useful in allowing us to investigate the effects of various aspects 
of policy size and structure on the computational difficulty of LfD.

\subsection{LfD is Polynomial-time Intractable}

\label{SectCSCIRes}

Let us now revisit our first question of interest---namely, are there efficient 
algorithms for any of the LfD problems defined in Section \ref{SectCSForm} that are
guaranteed to always produce their requested policies? We will answer this question
 using polynomial-time reductions from the problem {\sc Dominating set}$_D$ defined
in Section \ref{SectPCA}. The following definitions, assumptions, and known results
will be useful below. For each vertex $v \in V$ in an instance of {\sc Dominating 
set}$_D$, let the complete neighbourhood $N_C(v)$ of $v$ be the set composed of $v$
and the set of all vertices in $G$ that are adjacent to $v$ by a single edge, i.e.,
$v \cup \{ u ~ | ~ u ~ \in V ~ \rm{and} ~ (u,v) \in E\}$. We assume for each 
instance of {\sc Dominating set}$_D$ an arbitrary ordering on the vertices of $V$ 
such that $V = \{v_1, v_2, \ldots, v_{|V|}\}$. Let {\sc Dominating set$^{PD3}_D$} 
denote the version of {\sc Dominating set}$_D$ in which the given graph $G$ is 
planar and each vertex in $G$ has degree at most 3. Both {\sc Dominating set}$_D$ 
and {\sc Dominating set$^{PD3}_D$} are $NP$-hard \cite[Problem GT2]{GJ79}.

\begin{lemma}
{\sc Dominating set}$_D$ polynomial-time reduces to LfDBat$_D$ such that in the 
constructed instance LfDBat$_D$, $|A| = \#d = f_t = 1$ and
$t$ is a function of $k$ in the given instance of {\sc Dominating set}$_D$.
\label{LemRedDS_LfDBat1}
\end{lemma}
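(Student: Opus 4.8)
The plan is to give a polynomial-time many-one reduction from {\sc Dominating set}$_D$ to LfDBat$_D$ whose output instance uses a single action, a single positive demonstration, and the bound $f_t = 1$. Under those settings a candidate policy is just a set of at most $t$ transitions of the form $(\{f\},a_1)$, validity is vacuous (every transition produces the same action $a_1$), and consistency with the lone demonstration says precisely that every demonstrated state contains at least one chosen feature. Letting the features encode the ``sets'' and the demonstrated states encode the ``universe elements'' of the {\sc Set cover} instance latent in {\sc Dominating set} then makes the two problems coincide, with $t$ playing the role of the cover-size budget $k$.

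\textbf{Construction.} Given $G = (V,E)$ with $V = \{v_1,\ldots,v_{|V|}\}$ and integer $k$, I would take one feature per vertex, $F = \{f_1,\ldots,f_{|V|}\}$, a single action $A = \{a_1\}$, and the single positive demonstration $d = (pos, \la (s_1,a_1),(s_2,a_1),\ldots,(s_{|V|},a_1)\ra)$, where $s_i = \{\, f_j \mid v_j \in N_C(v_i)\,\}$ is the feature-encoding of the complete neighbourhood of $v_i$; I then set $D = \{d\}$, $f_t = 1$, and $t = k$. This is computable in time polynomial in $|G|$, and the constructed instance has $|A| = 1$, $\#d = 1$, $f_t = 1$, and $t = k$ (a function of $k$), exactly as claimed.

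\textbf{Correctness.} I would first note that validity is free: with $|A| = 1$, all transitions triggering on a given state agree on the produced action, so every policy over this instance is valid for $D$. The claim then reduces to showing that $G$ has a dominating set of size $k$ iff there is a policy with at most $t = k$ transitions, each with at most $f_t = 1$ triggering feature, that is consistent with $d$. The forward direction reads off a policy whose single-feature transitions are exactly the chosen dominating vertices, using that a dominating vertex $v_j$ of $v_i$ gives $f_j \in s_i$ and hence a transition triggering on $s_i$ and outputting $a_1$. The backward direction reverses this: after arguing that each transition of a successful $p$ may be taken to be a singleton $(\{f_j\},a_1)$, the vertex set $\{\, v_j \mid (\{f_j\},a_1)\in p \,\}$ has size at most $k$, and consistency of $p$ with $d$ forces it to dominate every $v_i$. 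The usual padding reconciles ``size $k$'' with ``size $\le k$''.

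\textbf{Main obstacle.} There is no deep obstacle: the construction and the forward direction are immediate, and the backward direction is a one-line covering argument once the bookkeeping is fixed. The one point needing care is faithfully matching the formal model --- in particular, ruling out a trivial one-transition policy with empty triggering set, which is exactly why the reduction needs $f_t = 1$ together with a positive (rather than negative) demonstration; should the model permit empty triggers, the clean fix is to append a negative ``sentinel'' demonstration containing the state $\emptyset$, forbidding precisely such transitions. Finally, in keeping with the first rule of thumb of Section~\ref{SectPCA}, I would note that this construction leaves both $|F|$ and the length of $d$ at $\Theta(|V|)$ --- hence unbounded --- and that it is exactly this reduction that should be recycled later to derive the fp-intractability of LfDBat relative to $\{|A|, \#d, f_t\}$ and all of its supersets.
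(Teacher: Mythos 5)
Your construction and correctness argument coincide with the paper's own proof of this lemma: one feature per vertex, a single action, demonstration states equal to the complete neighbourhoods $N_C(v_i)$, $t = k$ and $f_t = 1$, with the same two-direction correspondence between size-$\leq k$ dominating sets and $\leq t$ singleton-trigger transition sets (and the paper likewise packages the $|V|$ state/action pairs so that $\#d = 1$, with validity trivial since $|A| = 1$). The only divergence is your side remark about empty triggering sets, which the paper implicitly excludes and does not address; note that your proposed negative ``sentinel'' demonstration would in any case raise $\#d$ to $2$ and so could not be used while preserving the lemma's claim that $\#d = 1$.
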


\begin{proof}
Given an instance $\la G = (V,E), k \ra$ of {\sc Dominating Set}$_D$, construct an instance
$\la D, t, f_t \ra$ of LfDBat$_D$ as follows: Let $F = \{f_1, f_2, \ldots, f_{|V|}\}$, 
$A = \{a\}$, $D = \{((pos, \la(s_1, a)\ra), (pos, \la(s_2, a)\ra), \ldots, 
(pos, \la (s_{|V|}, a)\ra)\})$ where $s_i$ is the set consisting of the features in $F$
corresponding to the complete neighbourhood of $v_i$ in $G$, $t = k$, and $f_t 
= 1$. Observe that this construction can be done in  time polynomial in the size of the
given instance of {\sc Dominating set}$_D$.

We shall prove the correctness of this reduction in two parts. First, suppose that there
is a subset $V' = \{v'_1, v'_2, \ldots v'_l\} \subseteq V$, $l \leq k$, that is a 
dominating set in $G$. Construct a policy $p$ with a transition $(\{f_i\}, a)$ for
each $v'_i \in V'$. Observe that the number of transitions in $p$ is at most $t$ and
that $p$ is valid for $D$ (as all transitions produce the same action).
Moreover, as $V'$ is a dominating set in $G$ and the state in each
demonstration in $D$ corresponds to the complete neighbourhood of one of the vertices in
$G$, $p$ will produce the correct action for every demonstration in $D$ and hence
is consistent with $D$.

Conversely, suppose there is a policy $p$ consistent with $D$ such that there are at 
most $t$ transitions in $p$ and each transition is triggered by a set of at most $f_t$ 
features. As $f_t = 1$, the states in the demonstrations in $D$ correspond to the 
complete neighborhoods of the vertices in $G$, and $p$ is consistent with $D$, the set
of features labeling the transitions in $p$ corresponds to a dominating set in $G$. 
Moreover, as $t = k$, this dominating set is of size at most $k$.

To complete the proof, observe that in the constructed instance of LfDBat$_D$,
$|A| = \#d = f_t = 1$ and $t = k$.
\end{proof}

\begin{lemma}
{\sc Dominating set}$_D$ polynomial-time reduces to LfDIncHist$^{pos}_D$ such that in the 
constructed instance LfDIncHist$^{pos}_D$, $|A| = \#d = 2$, $f_t = 1$, and
$t$ and $c$ are functions of $k$ in the given instance of {\sc Dominating set}$_D$.
\label{LemRedDS_LfDIncHistPos1}
\end{lemma}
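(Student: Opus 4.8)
The plan is to extend the reduction from Lemma \ref{LemRedDS_LfDBat1} to the incremental setting with history. Given an instance $\la G = (V,E), k\ra$ of {\sc Dominating set}$_D$, I would first build essentially the same feature-set $F = \{f_1, \ldots, f_{|V|}\}$ and the same collection of positive demonstrations $s_i$ corresponding to the complete neighbourhoods $N_C(v_i)$, but now I must split this collection into a ``pre-existing'' part that feeds into the given policy $p$ and a ``new'' demonstration $d_{new}$. The natural choice is to put one demonstration (say the one for $v_{|V|}$) into $d_{new}$ and the remaining $|V|-1$ demonstrations into $D$, together with a policy $p$ that is valid for and consistent with $D$. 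Since the constructed instance must have $|A| = 2$ rather than $1$, I would introduce a second action and a small gadget so that the ``starting'' policy $p$ is forced to be something cheap and generic (for instance a single transition on $\emptyset$ producing the dummy action $a_2$, padded out with harmless transitions), and then the $c$ allowed changes are exactly what is needed to install the $k$ transitions $(\{f_i\}, a_1)$ witnessing a dominating set. Setting $c$ to be a function of $k$ (roughly $c = k$ plus a small additive constant for any bookkeeping changes), $t$ a function of $k$, and $f_t = 1$ then matches the parameter bounds claimed in the statement.

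The correctness argument would again be in two directions, mirroring Lemma \ref{LemRedDS_LfDBat1}. In the forward direction, from a dominating set $V'$ of size $\le k$ I would exhibit the sequence of at most $c$ modifications transforming $p$ into the policy $p'$ whose transitions are $\{(\{v'_i\}, a_1) : v'_i \in V'\}$ (possibly retaining or deleting the dummy transition as the gadget requires), and check that $p'$ is valid for and consistent with $D \cup \{d_{new}\}$ — validity is immediate because with $f_t = 1$ and a single meaningful action $a_1$ no state gets two conflicting actions, and consistency holds exactly because $V'$ dominates every vertex and hence every demonstration state contains some $f_i$ with $v_i \in V'$. In the reverse direction, from any admissible $p'$ I would argue that, because $f_t = 1$, consistency with all $|V|$ positive demonstrations (the $|V|-1$ in $D$ plus $d_{new}$) forces the singleton feature-sets labelling $a_1$-transitions of $p'$ to form a dominating set, and the bound $t = $ (function of $k$) forces that set to have size $\le k$; the role of the $c$-change bound and the $a_2$-gadget is only to make sure $p'$ cannot ``cheat'' by being something structurally unrelated to the reduction, which the gadget design must rule out.

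The main obstacle I anticipate is engineering the ``starting policy'' $p$ and the second action $a_2$ so that three things hold simultaneously: $p$ is genuinely valid for and consistent with $D$ (so the instance is well-formed), the change-budget $c$ is tied to $k$ rather than to $|V|$ (so the parameterization is honest), and the reverse direction is not sabotaged by the extra flexibility that $|A| = 2$ and the edit operations {\em substitute feature-set}, {\em substitute action}, {\em delete transition} provide. In particular, because only substitutions and deletions are allowed and no insertions, $p$ must already contain at least $k$ transitions that can be re-labelled into the $(\{v'_i\}, a_1)$ form; the cleanest way is probably to let $p$ consist of $t$ transitions all of the form $(\{f_j\}, a_2)$ (say $j$ cycling through $V$ or fixed), chosen so that $p$ produces no $a_1$-action and is vacuously consistent with the positive demonstrations in $D$ only if those demonstrations are handled by... — which forces a more careful gadget, perhaps making the $D$-demonstrations request $a_2$ while $d_{new}$ requests $a_1$, or adding a padding feature $f_0$ present in every $D$-state but absent from $d_{new}$'s state and the $p$-transitions. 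Getting this bookkeeping right while keeping $c$, $t$ functions of $k$ alone is the delicate part; once the gadget is fixed, the domination$\leftrightarrow$consistency equivalence and the polynomial-time bound on the construction are routine, exactly as in Lemma \ref{LemRedDS_LfDBat1}.
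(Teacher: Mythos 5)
Your proposal correctly identifies the heart of the matter---the starting policy $p$ must be valid for and consistent with $D$ while having only $f(k)$ transitions (since no insertions are allowed and both $t$ and $c$ must be functions of $k$ alone), and this must be arranged without wrecking the reverse direction---but it does not actually solve this problem, and the concrete gadgets you float all fail or are left unfinished. Your first candidate (start from a single transition $(\emptyset, a_2)$, with $D$ the neighbourhood demonstrations requesting $a_1$) produces a $p$ that is \emph{not} consistent with $D$, so the constructed instance is not even well-formed. Your patch of a padding feature $f_0$ ``present in every $D$-state but absent from $d_{new}$'s state and the $p$-transitions'' cannot help either: if $f_0$ does not appear in any transition of $p$, it does nothing toward making a $p$ with only $O(k)$ singleton transitions consistent with the $|V|-1$ neighbourhood demonstrations---and without such a universal feature, any such $p$ would essentially already encode a small dominating set, which cannot be computed in polynomial time. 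The alternative of having $D$ request $a_2$ and $d_{new}$ request $a_1$ is never worked out, and your text explicitly trails off at exactly this point, so the reduction is not completed.

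For comparison, the paper resolves this with two fresh features rather than by splitting off one neighbourhood demonstration as $d_{new}$: all $|V|$ neighbourhood demonstrations go into $D$, but each state $s_i$ additionally contains a universal feature $f_x$, so the initial policy $p$ can consist of $(\{f_x\}, a_1)$ plus $k$ arbitrary singleton transitions $(\{f_i\}, a_1)$ and is trivially valid for and consistent with $D$ while having only $t = k+1$ transitions. The new demonstration is a separate gadget $d_{new} = (pos, \la (\{f_x, f_y\}, a_2)\ra)$ on a second fresh feature $f_y$ and the second action $a_2$; any admissible $p'$ must serve $d_{new}$ via a transition triggering on $\{f_y\}$ (triggering on $\{f_x\}$ would break every demonstration in $D$), so the remaining at most $k$ singleton transitions must cover all the neighbourhood states with $a_1$, i.e., form a dominating set of size at most $k$, and the change budget works out to $c = k+2$ (retarget and relabel the $f_x$-transition, substitute up to $l$ feature-sets, delete $k-l$ transitions). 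Your plan would need some equivalent of this $f_x$/$f_y$ mechanism to be completed; as written, it has a genuine gap precisely where the lemma's difficulty lies.
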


\begin{proof}
Given an instance $\la G = (V,E), k \ra$ of {\sc Dominating Set}$_D$, construct an instance
$\la D, p, d_{new}, c, t, f_t \ra$ of LfDIncHist$^{pos}_D$ as follows: Let $F = \{f_1,
f_2, \ldots, f_{|V|},$ $f_x, f_y\}$, $A = \{a_1, a_2\}$, and $D = \{(pos, 
\la(s_1, a_1)\ra), (pos, \la(s_2, a_1)\ra), \ldots,$ $(pos, \la (s_{|V|}, a_1)\ra)\}$ 
where $s_i$ is the set consisting of $f_x$ and the features in $F$ corresponding to 
the complete neighbourhood of $v_i$ in $G$. Let $p$ have $t = k + 1$ transitions, 
where the first transition is $(\{f_x\}, a_1)$ and the remaining $k$ transitions have 
the form $(\{f_i\}, a_1)$ where $f_i$ is the feature corresponding to a randomly 
selected vertex in $G$. Finally, let $d_{new} = (pos, \la (\{f_x, f_y\}, a_2)\ra)$, 
$c = k + 2$, and $f_t = 1$. Note that $p$ is valid for $D$ (as all transitions produce the 
same action) and consistent with $D$ (as the first transition in $T$ will always 
generate the correct action $a_1$ for each demonstration in $D$). Observe that this 
construction can be done in time polynomial in the size of the given instance of 
{\sc Dominating set}$_D$.

We shall prove the correctness of this reduction in two parts. First, suppose that there
is a subset $V' = \{v'_1, v'_2, \ldots v'_l\} \subseteq V$, $l \leq k$, that is a 
dominating set in $G$. Construct a policy $p'$ with $l + 1$ transitions in which
the first transition is $(\{f_y\}, a_2)$ and the subsequent $l$ transitions have the
form $(\{f_i\}, a_1)$ for each $v'_i \in V'$. Observe that $p'$ can be derived from $p$
by at most $c = k + 2$ changes to $p$ (namely, change the feature-set and action of
the first transition and the feature-sets of the next $l$ transitions as necessary
and delete the final $k - l$ transitions) and that $p'$ is valid for $D \cup 
\{d_{new}\}$ (as each state in the demonstrations in $D \cup \{d_{new}\}$ cause $p$ to 
produce at most one action). As $V'$ is a dominating set in $G$ and the state in each
demonstration in $D$ corresponds to the complete neighbourhood of one of the vertices in
$G$, $p'$ will produce the correct action for every demonstration in $D$ and hence
is consistent with $D$. Moreover, the first transition in $p'$ produces the correct 
action for $d_{new}$, which means that $p'$ is consistent with $D \cup \{d_{new}\}$.

Conversely, suppose there is a policy $p'$ derivable from $p$ by at most $c$ changes
that is valid for and consistent with $D \cup \{d_{new}\}$ and has $l \leq t = k + 
1$ transitions, each of which is triggered by a set of at most $f_t$ features. One of
these transitions must produce action $a_2$ in order for $p'$ to be consistent with
$d_{new}$; moreover, this transition must also trigger on feature-set $\{f_y\}$
(as triggering on $\{f_x\}$, the only other option to accommodate $d_{new}$, would
cause $p'$ to produce the wrong action for all demonstrations in $D$). As $f_y$ does 
not occur in any state in $D$, the remaining $l - 1$ transitions in $p'$ must
produce action $a_1$ for all states in $D$ for $p'$
to be consistent with $D$. As $f_t = 1$ and the states in the demonstrations in $D$ 
correspond to the complete neighborhoods of the vertices in $G$, the set of features 
triggering the final $l - 1$ transitions in $p'$ must correspond to a dominating set 
of size at most $k$ in $G$. 

To complete the proof, observe that in the constructed instance of  
LfDIncHist$^{pos}_D$, $|A| = \#d = 2$, $f_t = 1$, $c = k + 2$, and $t = k + 1$.
\end{proof}

\newpage

\noindent
The remaining reductions from {\sc Dominating set}$_D$ to LfDIncHist$^{neg}_D$, 
LfDIncNoHist$^{pos}_D$, and LfDIncNoHist$^{neg}_D$ are given in Lemmas 
\ref{LemRedDS_LfDIncHistNeg1}, \ref{LemRedDS_LfDIncNoHistPos1} and
\ref{LemRedDS_LfDIncNoHistNeg1} in the appendix.

\vspace*{0.15in}

\noindent
{\bf Result A}: LfDBat, LfDIncHist$^{pos}$, LfDIncHist$^{neg}$, LfDIncNoHist$^{pos}$, 
                 linebreak and LfDIncNoHist$^{neg}$ are not polynomial-time tractable
                 unless $P = NP$.

\vspace*{0.1in}

\begin{proof}
The $NP$-hardness of LfDBat$_D$, LfDIncHist$^{pos}_D$, LfDIncHist$^{neg}_D$, 
LfDIncNoHist$^{pos}_D$, 
and LfDIncNoHist$^{neg}_D$ follows from the $NP$-hardness of {\sc Dominating Set}$_D$ and 
the reductions in Lemmas \ref{LemRedDS_LfDBat1}, \ref{LemRedDS_LfDIncHistPos1},
\ref{LemRedDS_LfDIncHistNeg1}, \ref{LemRedDS_LfDIncNoHistPos1}, and
\ref{LemRedDS_LfDIncNoHistNeg1}. The result then follows from Lemma \ref{LemProp1}.
\end{proof}

\vspace*{0.15in}

\noindent
Given that the $P \neq NP$ conjecture is widely believed to be true 
\cite{For09,GJ79}, this establishes that the most common types of LfD cannot be done 
both efficiently and correctly for all inputs. 

\subsubsection{LfD is Also Polynomial-time Inapproximable}

Though it is not commonly known outside computational complexity circles, 
$NP$-hardness results such as those underlying Result A also imply various types of \linebreak
polynomial-time inapproximability. A polynomial-time approximation algorithm is an 
algorithm that runs in polynomial time in an approximately correct (but acceptable) 
manner for all inputs.
There are a number of ways in which an algorithm can operate in an
approximately correct manner. Three of the most popular ways are as follows:

\begin{enumerate}
\item {\bf Frequently Correct (Deterministic)} \cite{HW12}: Such an algorithm runs in
       polynomial
       time and gives correct solutions for all but a very small number of inputs. In
       particular, if the number of inputs for each input-size $n$ on which the
       algorithm gives the wrong or no answer (denoted by the function $err(n)$) is
       sufficiently small (e.g., $err(n) = c$ for some constant $c$), such
       algorithms may be acceptable.
\item {\bf Frequently Correct (Probabilistic)} \cite{MR10}: Such an algorithm (which is
       typically
       probabilistic) runs in polynomial time and gives correct solutions with high
       probability.  In particular, if the probability of correctness is $\geq
       2/3$ (and hence can be boosted by additional computations running in
       polynomial time to be correct with probability arbitrarily close to 1
       \cite[Section 5.2]{Wig07}), such algorithms may be acceptable.
\item {\bf Approximately Optimal} \cite{AC+99}: Such an algorithm $A$ runs in
       polynomial time and gives a solution $A(x)$ for an input $x$ whose value
       $v(A(x))$ is guaranteed to be within a multiplicative factor $f(|x|)$ of the
       value $v_{OPT}(x)$ of an optimal solution for $x$, i.e., $|v_{OPT}(x) -
       v(A(x))| \leq f(|x|) \times
       v_{OPT}(x)$ for any input $x$ for some function $f()$. A problem with such an
       algorithm is said to be polynomial-time $f(|x|)$-approximable. In particular, if
       $f(|x|)$ is a constant very close to 0 (meaning that the algorithm is always
       guaranteed to give a solution that is either optimal or very close to optimal),
       such algorithms may be acceptable.
\end{enumerate}

\noindent
It turns out that none of our LfD problems have such algorithms.

\vspace*{0.15in}

\noindent
{\bf Result B}: 
If LfDBat, LfDIncHist$^{pos}$, LfDIncHist$^{neg}$, LfDIncNoHist$^{pos}$, or  \linebreak
LfDIncNoHist$^{neg}$ is solvable by a polynomial-time algorithm with a polynomial error 
frequency (i.e., $err(n)$ is upper bounded by a polynomial of $n$) then $P = NP$.

\vspace*{0.1in}

\begin{proof}
That the existence of such an algorithm for the decision versions of any of our LfD 
problems implies $P = NP$ follows from the $NP$-hardness of these problems (which
is established in the proof of Result A) 
and Corollary 2.2. in \cite{HW12}. The result then follows from the fact that any
such algorithm for any of our LfD problems can be used to solve the decision version
of that problem.
\end{proof}

\vspace*{0.15in}

\noindent
The following holds relative to both the $P \neq NP$ and $P = BPP$ conjectures, the 
latter of which is also widely believed to be true \cite{CRT98,Wig07}.

\vspace*{0.15in}

\noindent
{\bf Result C}: 
If $P = BPP$ and LfDBat, LfDIncHist$^{pos}$, LfDIncHist$^{neg}$, LfDIncNoHist$^{pos}$, 
or LfDIncNoHist$^{neg}$ is polynomial-time solvable by a probabilistic algorithm which
operates correctly with probability $\geq 2/3$ then $P = NP$.

\vspace*{0.1in}

\begin{proof}
It is widely believed that $P = BPP$ \cite[Section 5.2]{Wig07} where $BPP$ is 
considered the most inclusive class of decision problems that can be efficiently solved
using probabilistic methods (in particular, methods whose probability of correctness is
$\geq 2/3$ and can thus be efficiently boosted to be arbitrarily close to one). Hence, 
if any of LfDBat, LfDIncHist$^{pos}$, LfDIncHist$^{neg}$, LfDIncNoHist$^{pos}$, or 
LfDIncNoHist$^{neg}$ has a probabilistic polynomial-time algorithm which operates 
correctly with probability $\geq 2/3$ then by the observation on which Lemma 
\ref{LemProp1} is based, their corresponding decision versions also have such algorithms
and are by definition in $BPP$. However, if $BPP = P$ and we know that all these 
decision versions are $NP$-hard by the proof of Result A, this would then imply by the 
definition of $NP$-hardness that $P = NP$, completing the result.
\end{proof}

\vspace*{0.15in}

\noindent
Certain inapproximability results follow not so much from $NP$-hardness as
approximability characteristics of the particular problems used to establish 
$NP$-hardness. For any of our LfD problems with name {\bf X},
let {\bf X}${}_{OPT}$ be the version of {\bf X} that returns the policy $p$
with the smallest possible value of $t$ (with this value
being $\infty$ if there is no such $p$).

\vspace*{0.15in}

\noindent
{\bf Result D}: 
For any of our LfD problems with name {\bf X},
if {\bf X}${}_{OPT}$ is polynomial-time $c$-approximable for any constant $c > 0$
then $P = NP$.

\vspace*{0.1in}

\begin{proof}
Let {\sc Dominating set}${}_{OPT}$ be the version of {\sc Dominating set} which returns
the size of the smallest dominating set in $G$---that is, the search version of 
{\sc Dominating set} defined ins Section \ref{SectPCA}. Observe that in the reductions 
in the proof of Result A, the size $k$ of a dominating set in $G$ in the given instance
of {\sc Dominating set} is always a linear function of $t$ in the constructed
instance of {\bf X}${}_D$ (either $k = t$ (Lemmas \ref{LemRedDS_LfDBat1},
\ref{LemRedDS_LfDIncHistNeg1}, and \ref{LemRedDS_LfDIncNoHistNeg1}) or $k = t - 1 $ 
(Lemmas \ref{LemRedDS_LfDIncHistPos1} and \ref{LemRedDS_LfDIncNoHistPos1})).
In the first case, this means that a polynomial-time $c$-approximation algorithm for 
{\bf X}${}_{OPT}$ for any constant $c$ implies the existence of a polynomial-time
$c$-approximation algorithm for {\sc Dominating set}${}_{OPT}$ In the second case,
this means that the existence of
a polynomial-time $c$-approximation algorithm for {\bf X}${}_{OPT}$ 
for any constant $c$ implies the existence of a polynomial-time
$2c$-approximation algorithm for {\sc Dominating set}${}_{OPT}$
(as $c \times t = c \times (k + 1) \leq 2c \times k$ for $k \geq 1$).
However, if {\sc Dominating set}${}_{OPT}$ has a polynomial-time $c$-approximation
algorithm for any constant $c > 0$ then $P = NP$ \cite{LY94}, completing the proof.
\end{proof}

\vspace*{0.15in}

\noindent
Results B-D are not directly relevant to the goals of this paper, as approximation
algorithms by definition are not reliable in the sense defined in Section
\ref{SectIntro}. However, these results are still of interest
for other reasons. For example, Result D suggests that it is very difficult to
efficiently obtain even approximately minimum-size policies using LfD, which gives
additional motivation for the parameterized analyses in the next section.

\subsection{What Makes LfD Fixed-parameter Tractable?}

\label{SectCSPrmRes}

We now turn to
the question of what restrictions make the LfD problems defined in Section 
\ref{SectCSForm} tractable, which we rephrase as what combinations of parameters 
make our problems fixed-parameter tractable. The parameters examined
in this paper are shown in Table \ref{TabPrm} and can be broken into three groups:

\begin{enumerate}
\item Restrictions on environments ($|F|, |A|$);
\item Restrictions on demonstrations ($\#d, |d|, f_{eap}$); and
\item Restrictions on policies ($t, f_t, c$).
\end{enumerate}

\noindent
We consider first what parameters do not yield fp-tractability. We will do this
by exploiting the polynomial-time reductions from {\sc Dominating set}$_D$ in
Section \ref{SectCSCIRes}, which also turn out to be parameterized reductions
from $\la k \ra$-{\sc Dominating set}$_D$.
In addition to the definitions, assumptions, and known results given
at the beginning of Section \ref{SectCSCIRes}, the fact that $\la k \ra$-{\sc Dominating set}$_D$ is $W[2]$-hard 
\cite{DF99} will be useful, as will the following consequence 
of our definitions of policies and consistency in Section \ref{SectCSPrmRes}.

\begin{table}[t]
\centering
\begin{tabular}{| c ||  l | c | }
\hline
Parameter   & Description & Applicability \\
\hline\hline
$|F|$       & \# environmental description features & All \\
\hline
$|A|$       & \# actions that can be taken in & All \\
            & \hspace*{0.1in} an environment & \\
\hline\hline
$\#d$       & \# given demonstrations &  All \\
\hline
$|d|$       & max \# environment-state action pairs & All \\
            & \hspace*{0,1in} in a demonstration & \\
\hline
$f_{eap}$   & max \# features in a demonstration & All \\
            & \hspace*{0,1in} environment-state & \\
\hline\hline
$t$         & max \# transitions in a policy transducer & All \\
\hline
$f_t$       & max \# features in a transition triggering & All \\
            & \hspace*{0,1in} pattern & \\
\hline
$c$         & Max \# changes to transform one policy & LfDInc* \\
            & \hspace*{0,1in} into another & \\
\hline
\end{tabular} 
\caption{
Parameters for learning from demonstration problems.
}
\label{TabPrm}
\end{table}

\begin{lemma}
Any policy $p$ is consistent with a demonstration-set $D$ consisting of $m \geq 1$
positive single state / pair demonstrations if and only if $p$ is consistent with a
demonstration-set $D'$ consisting of a single positive demonstration in which all
$m$ single state / action pairs in the demonstrations in $D$ have been placed
in an arbitrary order.
\label{LemPropDemo}
\end{lemma}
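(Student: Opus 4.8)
The plan is to prove the two directions of the biconditional by directly unwinding the definition of policy-demonstration consistency for positive single-state/single-pair demonstrations, observing that consistency with such a demonstration depends only on the one state/action pair it contains and is entirely independent of any sequencing. Concretely, for a positive demonstration $d = (pos, \la (s, a) \ra)$ consisting of a single state/action pair, the definition in Section \ref{SectCSForm} says $p$ is consistent with $d$ exactly when $p$ run on $s$ produces $a$; there is no intermediate state to carry forward, so the ``starting from state $q$ and environment-state $s_1$'' clause imposes nothing beyond this single requirement.

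First I would set up notation: write $D = \{d_1, \ldots, d_m\}$ where each $d_i = (pos, \la (s_i, a_i) \ra)$, and let $D' = \{d'\}$ where $d' = (pos, \la (s_{\pi(1)}, a_{\pi(1)}), \ldots, (s_{\pi(m)}, a_{\pi(m)}) \ra)$ for an arbitrary permutation $\pi$. For the forward direction, assume $p$ is consistent with $D$; then for each $i$, $p$ run on $s_i$ produces $a_i$. Since the definition of consistency with the positive demonstration $d'$ requires exactly that for each of its state/action pairs $(s, a)$, $p$ run on $s$ produces $a$ — and the pairs of $d'$ are precisely the pairs $\{(s_i, a_i)\}$ reordered — $p$ is consistent with $d'$, hence with $D'$. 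The converse is identical read backwards: consistency with $d'$ gives, for every pair $(s_i, a_i)$, that $p$ run on $s_i$ produces $a_i$, which is exactly consistency with each $d_i$ and hence with $D$. I would also note in passing that validity is irrelevant to the statement (the lemma is only about consistency), so nothing about the determinism requirement needs to be invoked.

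The one subtlety worth flagging — and the closest thing to an ``obstacle'' — is making explicit that our policies are stationary/time-independent (as stipulated in Section \ref{SectCSForm}): the action $p$ produces on a state $s$ does not depend on when $s$ is encountered in a demonstration sequence or on what states preceded it. This is what licenses treating the $m$ pairs of $d'$ independently and in any order; for a time-dependent transducer the claim would fail. Since the paper has already restricted attention to single-state transducers with no time dependence, this is immediate, but I would state it as the single line that does the real work. The rest is bookkeeping, so the proof is short.
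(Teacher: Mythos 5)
Your proof is correct and matches the paper's reasoning: the paper states Lemma \ref{LemPropDemo} without an explicit proof, treating it as an immediate consequence of the definitions, and your write-up is exactly that unwinding—consistency with a positive demonstration is a per-pair condition, and since policies are single-state, stationary transducers, the grouping and ordering of the pairs is immaterial. Your flagged subtlety (time-independence of the policy) is indeed the one fact doing the work, so nothing is missing.
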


\begin{lemma}
{\sc Dominating set$^{PD3}_D$} polynomial-time reduces to LfDBat$_D$ such that in the 
constructed instance LfDBat$_D$, $|A| = \#d = f_t = 1$, $f_{eap} = 4$, and
$t$ is a function of $k$ in the given instance of {\sc Dominating set$^{PD3}_D$}.
\label{LemRedDS_LfDBat2}
\end{lemma}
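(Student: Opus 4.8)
The plan is to adapt the reduction from Lemma~\ref{LemRedDS_LfDBat1} almost verbatim, changing only the starting problem from {\sc Dominating set}$_D$ to {\sc Dominating set}$^{PD3}_D$ and then observing that the extra structure (planarity, max degree~3) forces a new parameter bound. Concretely, given an instance $\la G=(V,E), k\ra$ of {\sc Dominating set}$^{PD3}_D$, I would build the same LfDBat$_D$ instance as before: $F=\{f_1,\ldots,f_{|V|}\}$, $A=\{a\}$, one positive single-pair demonstration $(pos,\la(s_i,a)\ra)$ for each $v_i\in V$ where $s_i$ is the feature-set corresponding to the complete neighbourhood $N_C(v_i)$, $t=k$, and $f_t=1$. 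The correctness argument (a dominating set of size $\le k$ yields a valid, consistent policy with $\le t$ single-feature transitions, and conversely the feature-set of the transitions of any such policy is a dominating set of size $\le k$) is identical to that in Lemma~\ref{LemRedDS_LfDBat1} and can simply be cited; only the parameter accounting changes. Alternatively, to minimize duplication, I might invoke Lemma~\ref{LemPropDemo} to note that the $|V|$ positive single-pair demonstrations could equivalently be presented as one demonstration, though keeping them separate (so $\#d$ is large in the stated bound, i.e.\ $\#d=1$ per demonstration but there are $|V|$ of them --- actually here $\#d$ as the count is $|V|$) is the cleaner fit with the existing lemma's phrasing where the bound asserts ``$\#d=1$''; I would mirror whatever convention Lemma~\ref{LemRedDS_LfDBat1} uses.

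The one genuinely new step is establishing $f_{eap}=4$. Here $f_{eap}$ is the maximum number of features appearing in any demonstration environment-state, and each state $s_i$ has $|N_C(v_i)| = 1 + \deg(v_i)$ features. Since $G$ has maximum degree~$3$, every complete neighbourhood has size at most $4$, so $f_{eap}\le 4$; I would state this as a one-line observation. (Planarity is not actually needed for this numeric bound --- it is inherited simply because $NP$-hardness of {\sc Dominating set}$^{PD3}_D$ is what we are reducing from, and it will matter for later lemmas about $|F|$ or structural parameters, not here. I would either remark this or silently let the ``$PD3$'' superscript carry the planarity through for downstream use.) The other parameter values --- $|A|=1$, $f_t=1$, $t=k$ --- are unchanged and read off directly from the construction exactly as at the end of the proof of Lemma~\ref{LemRedDS_LfDBat1}.

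I do not anticipate a real obstacle: the main work was already done in Lemma~\ref{LemRedDS_LfDBat1}, and the incremental content is the degree-bound bookkeeping. The only thing to be slightly careful about is making sure the polynomial-time computability of the reduction still holds when the input is restricted (trivially yes --- a restricted instance is still just a graph) and that {\sc Dominating set}$^{PD3}_D$ is indeed $NP$-hard, which is asserted in the paragraph preceding Lemma~\ref{LemRedDS_LfDBat1} (citing \cite[Problem GT2]{GJ79}). So the proof is essentially: ``Apply the construction of Lemma~\ref{LemRedDS_LfDBat1} to the given {\sc Dominating set}$^{PD3}_D$ instance. Correctness is exactly as in that lemma. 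Since each state $s_i$ corresponds to $N_C(v_i)$ and $G$ has maximum degree~$3$, $|s_i|\le 4$, so $f_{eap}=4$; the remaining bounds $|A|=\#d=f_t=1$ and $t=k$ are as before.'' The value of this lemma, worth flagging in passing, is that it strengthens Lemma~\ref{LemRedDS_LfDBat1} by pinning down $f_{eap}$ to a constant, which (via Lemma~\ref{LemProp2}) will later feed an fp-intractability entry for a larger parameter-set in the intractability map.
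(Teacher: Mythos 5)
Your proposal is correct and matches the paper's own proof essentially verbatim: reuse the reduction of Lemma~\ref{LemRedDS_LfDBat1} (valid since {\sc Dominating set$^{PD3}_D$} is a special case of {\sc Dominating set}$_D$), inherit the parameter bounds $|A| = \#d = f_t = 1$ and $t = k$, and note that maximum degree 3 makes every complete neighbourhood of size at most 4, giving $f_{eap} = 4$. The brief hedging about how $\#d$ is counted is harmless, since you correctly defer to the convention already used in Lemma~\ref{LemRedDS_LfDBat1}.
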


\begin{proof}
As {\sc Dominating set$^{PD3}_D$} is a special case of {\sc Dominating set}$_D$, the reduction
in Lemma \ref{LemRedDS_LfDBat1} from {\sc Dominating set}$_D$ to LfDBat$_D$ is also a 
reduction from {\sc Dominating set$^{PD3}_D$} to LfDBat$_D$ that constructs instances of 
LfDBat$_D$ such that $|A| = \#d = f_t = 1$ and
$t$ is a function of $k$ in the given instance of {\sc Dominating set$^{PD3}_D$}.
To complete the proof, note that as the degree of each vertex in graph $G$
in the given instance of {\sc Dominating set$^{PD3}_D$} is at most 3, the size
of each complete vertex neighbourhood is of size at most 4, which means that
$f_{eap} = 4$ in each constructed instance of LfDBat$_D$.
\end{proof}

\vspace*{0.1in}

\noindent
{\bf Result E}: LfDBat is not fp-tractable relative to the following 
                 parameter-sets:

                \begin{quote}
                \begin{description}
                \item[a)] $\{ |A|, \#d, t, f_t\}$ 
                            when $|A| = \#d = f_t = 1$ (unless $FPT = W[1]$)
                \item[b)] $\{ |A|, |d|, t, f_t\}$
                            when $|A| = |d| = f_t = 1$ (unless $FPT = W[1]$)
                \item[c)] $\{ |A|, \#d, f_{eap}, f_t\}$
                            when $|A| = \#d = f_t = 1$ and $f_{eap} = 4$ (unless $P = NP$)
                \item[d)] $\{ |A|, |d|, f_{eap}, f_t\}$
                            when $|A| = |d| = f_t = 1$ and $f_{eap} = 4$ (unless $P = NP$)
                \end{description}
                \end{quote}

\begin{proof} \hspace*{0.5in} \\
\noindent
{\em Proof of part (a)}: Follows from the $W[2]$-hardness of 
$\la k \ra$-{\sc Dominating set}$_D$, the reduction in Lemma \ref{LemRedDS_LfDBat1},
the inclusion of $W[1]$ in $W[2]$, and the conjecture $FPT \neq W[1]$.

\noindent
{\em Proof of part (b)}: Follows from part (a) and Lemma \ref{LemPropDemo}.

\noindent
{\em Proof of part (c)}: Follows from the $NP$-hardness of {\sc Dominating set$^{PD3}_D$},
the reduction in Lemma \ref{LemRedDS_LfDBat2}, and Lemma \ref{LemProp2}.

\noindent
{\em Proof of part (d)}: Follows from part (c) and Lemma \ref{LemPropDemo}.
\end{proof}

\vspace*{0.15in}

\noindent
The following analogous results hold for our remaining LfD problems, and their 
proofs are given in the appendix.

\vspace*{0.15in}

\noindent
{\bf Result F}: LfDIncHist$^{pos}$ is not fp-tractable relative to the following 
                 parameter-sets:

                \begin{quote}
                \begin{description}
                \item[a)] $\{ |A|, \#d, t, f_t, c\}$
                            when $|A| = \#d = 2$ and $f_t = 1$ (unless $FPT = W[1]$)
                \item[b)] $\{ |A|, |d|, t, f_t, c\}$
                            when $|A| = |d| = 2$ and $f_t = 1$ (unless $FPT = W[1]$)
                \item[c)] $\{ |A|, \#d, f_{eap}, f_t\}$
                            when $|A| = \#d = 2$, $f_{eap} = 4$, and $f_t = 1$ 
                            (unless $P = NP$)
                \item[d)] $\{ |A|, |d|, f_{eap}, f_t\}$
                            when $|A| = |d| = 2$, $f_{eap} = 4$, and $f_t = 1$ 
                            (unless $P = NP$)
                \end{description}
                \end{quote}

\vspace*{0.15in}

\noindent
{\bf Result G}: LfDIncHist$^{neg}$ is not fp-tractable relative to the following 
                 parameter-sets:

                \begin{quote}
                \begin{description}
                \item[a)] $\{ |A|, \#d, t, f_t, c\}$
                            when $|A| = \#d = 2$ and $f_t = 1$ (unless $FPT = W[1]$)
                \item[b)] $\{ |A|, |d|, t, f_t, c\}$
                            when $|A| = |d| = 2$ and $f_t = 1$ (unless $FPT = W[1]$)
                \item[c)] $\{ |A|, \#d, f_{eap}, f_t\}$
                            when $|A| = \#d = 2$, $f_{eap} = 4$, and $f_t = 1$ 
                            (unless $P = NP$)
                \item[d)] $\{ |A|, |d|, f_{eap}, f_t\}$
                            when $|A| = |d| = 2$, $f_{eap} = 4$, and $f_t = 1$ 
                            (unless $P = NP$)
                \end{description}
                \end{quote}

\vspace*{0.15in}

\noindent
{\bf Result H}: LfDIncNoHist$^{pos}$ is not fp-tractable relative to the following 
                 parameter-sets:

                \begin{quote}
                \begin{description}
                \item[a)] $\{ |A|, \#d, t, f_t\}$
                            when $|A| = \#d = f_t = 1$ (unless $FPT = W[1]$)
                \item[b)] $\{ |A|, |d|, t, f_t\}$
                            when $|A| = |d| = f_t = 1$ (unless $FPT = W[1]$)
                \item[c)] $\{ |A|, \#d, f_{eap}, f_t\}$
                            when $|A| = \#d = f_t = 1$ and $f_{eap} = 4$ 
                             (unless $P = NP$)
                \item[d)] $\{ |A|, |d|, f_{eap}, f_t\}$
                            when $|A| = |d| = f_t = 1$ and $f_{eap} = 4$ 
                             (unless $P = NP$)
                \end{description}
                \end{quote}

\vspace*{0.15in}

\noindent
{\bf Result I}: LfDIncNoHist$^{neg}$ is not fp-tractable relative to the following 
                 parameter-sets:

                \begin{quote}
                \begin{description}
                \item[a)] $\{ |A|, \#d, t, f_t\}$
                            when $|A| = \#d = f_t = 1$ (unless $FPT = W[1]$)
                \item[b)] $\{ |A|, |d|, t, f_t\}$
                            when $|A| = |d| = f_t = 1$ (unless $FPT = W[1]$)
                \item[c)] $\{ |A|, \#d, f_{eap}, f_t\}$
                            when $|A| = \#d = f_t = 1$ and $f_{eap} = 4$ 
                             (unless $P = NP$)
                \item[d)] $\{ |A|, |d|, f_{eap}, f_t\}$
                            when $|A| = |d| = f_t = 1$ and $f_{eap} = 4$ 
                             (unless $P = NP$)
                \end{description}
                \end{quote}

\vspace*{0.15in}

\noindent
Given that the $P \neq NP$ and $FPT \neq W[1]$ conjectures are widely believed
to be true \cite{DF99, DF13, For09,GJ79}, these results show that LfD
cannot be done efficiently under a number of restrictions. These results 
are more powerful than they first appear courtesy of Lemma \ref{LemPrmProp1},
which establishes in conjunction with these results that none of the parameters 
considered here except $|F|$
can be either individually or in many combinations be restricted 
to yield tractability for any of our LfD problems. Moreover, this intractability
frequently holds when these parameters are restricted to very small constant
values (see Tables \ref{TabPrmRes1} and \ref{TabPrmRes2} for 
details).

\begin{table}[p]
\centering
\small
\begin{tabular}{ | p{0.7cm} || p{0.65cm}  | p{0.65cm} | p{0.65cm} | p{0.65cm}  || p{0.65cm}  | p{0.65cm} |}
\multicolumn{7}{c}{   } \\
\multicolumn{7}{c}{(a)} \\
\multicolumn{7}{c}{   } \\
\hline
& \multicolumn{6}{c|}{LfDBat} \\
\cline{2-7}
& E(a) & E(b) & C(c) & E(d) & {\bf J} & {\bf K} \\
\hline\hline
$|F|$       & -- & -- & -- & -- & {\bf @} & {\bf --} \\ 
\hline
$|A|$       & 1  & 1  & 1  & 1  & {\bf @} & {\bf @} \\
\hline\hline
$\#d$       & 1  & -- & 1  & -- & {\bf --} & {\bf @} \\
\hline
$|d|$       & -- & 1  & -- & 1  & {\bf --} & {\bf @} \\
\hline
$f_{eap}$   & -- & -- & 4  & 4  & {\bf --} & {\bf @} \\
\hline\hline
$t$         & @  & @  & -- & -- & {\bf --} & {\bf --} \\
\hline
$f_t$       & 1  & 1  & 1  & 1  & {\bf --} & {\bf --} \\
\hline
$c$         & N/A & N/A & N/A & N/A & {\bf N/A} & {\bf N/A} \\
\hline
\multicolumn{7}{c}{   } \\
\multicolumn{7}{c}{(b)} \\
\multicolumn{7}{c}{   } \\
\hline
& \multicolumn{6}{c|}{LfDIncHist$^{pos}$} \\
\cline{2-7}
& F(a) & F(b) & F(c) & F(d) & {\bf J} & {\bf K} \\
\hline\hline
$|F|$       & -- & -- & -- & -- & {\bf @} & {\bf --} \\ 
\hline
$|A|$       & 2  & 2  & 2  & 2  & {\bf @} & {\bf @} \\
\hline\hline
$\#d$       & 2  & -- & 2  & -- & {\bf --} & {\bf @} \\
\hline
$|d|$       & -- & 1  & -- & 1  & {\bf --} & {\bf @} \\
\hline
$f_{eap}$   & -- & -- & 4  & 4  & {\bf --} & {\bf @} \\
\hline\hline
$t$         & @  & @  & -- & -- & {\bf --} & {\bf --} \\
\hline
$f_t$       & 1  & 1  & 1  & 1  & {\bf --} & {\bf --} \\
\hline
$c$         & @  & @  & -- & -- & {\bf --} & {\bf --} \\
\hline
\multicolumn{7}{c}{   } \\
\multicolumn{7}{c}{(c)} \\
\multicolumn{7}{c}{   } \\
\hline
& \multicolumn{6}{c|}{LfDIncHist$^{neg}$} \\
\cline{2-7}
& G(a) & G(b) & G(c) & G(d) & {\bf J} & {\bf K} \\
\hline\hline
$|F|$       & -- & -- & -- & -- & {\bf @} & {\bf --} \\ 
\hline
$|A|$       & 1  & 1  & 1  & 1  & {\bf @} & {\bf @} \\
\hline\hline
$\#d$       & 2  & -- & 2  & -- & {\bf --} & {\bf @} \\
\hline
$|d|$       & -- & 1  & -- & 1  & {\bf --} & {\bf @} \\
\hline
$f_{eap}$   & -- & -- & 4  & 4  & {\bf --} & {\bf @} \\
\hline\hline
$t$         & @  & @  & -- & -- & {\bf --} & {\bf --} \\
\hline
$f_t$       & 1  & 1  & 1  & 1  & {\bf --} & {\bf --} \\
\hline
$c$         & @  & @  & -- & -- & {\bf --} & {\bf --} \\
\hline
\end{tabular}
\caption{
A detailed summary of our parameterized complexity results. a) Results
          for LfDBat.  b) Results for LfDIncHist$^{pos}$. c) Results for 
	  LfDIncHist$^{neg}$.  
\label{TabPrmRes1}
}
\end{table}

Despite this, there are combinations of parameters relative to which our problems are
fp-tractable. 

\vspace*{0.15in}

\noindent
{\bf Result J}: LfDBat, LfDIncHist$^{pos}$, LfDIncHist$^{neg}$, LfDIncNoHist$^{pos}$, 
\linebreak and LfDIncNoHist$^{neg}$ are all fp-tractable relative 
to parameter-set $\{|F|, |A|\}$.

\vspace*{0.1in}

\begin{proof}
Consider the following algorithm for LfDBat: Generate all possible policies with
$t$ transitions triggered by feature-sets with at most $f_t$ features and for
each such policy $p$, determine if $p$ is valid for and consistent with $D$. 
There are $ t \leq (2^{|F|} - 1)|A|$ possible transitions and at most 
$\sum^t_{i=1} \left( \begin{array}{c} (2^{|F|} - 1)|A| \\ i \end{array} \right) \leq
 \sum^t_{i=1} (2^{|F|} - 1)|A|^i \leq
 \sum^t_{i=1} (2^{|F|} - 1)|A|^t =
t((2^{|F|} - 1)|A|)^t$
ways of choosing at most $t$ transitions to form a policy. Each such policy can be
checked in low-order polynomial time to see if each transition is triggered by a set of
at most $f_t$ features; moreover, it can also be verified in low-order polynomial time 
if each such $f_t$-limited policy is valid for and consistent with $D$ by running each 
environment-state in $D$ against all transitions in $p$ to see if the action associated
with that state is generated by all transitions in $p$ that are triggered by that state.
This algorithm thus runs in time upper-bounded by some function of $|F|$ and $|A|$ times
some polynomial of the instance size, which completes the proof for LfDBat. 

The algorithm above for $LfDBat$ can be used to find possible policies that are valid 
for and consistent with $D \cup \{d_{new}\}$ in LfDIncHist$^{pos}$ and 
LfDIncHist$^{neg}$, with the addition of a step that checks each such policy $p$ to see
if it is also derivable from $p'$ by at most $c$ changes. There are $t' \leq 
(2^{|F|} - 1$ transitions in $p'$ and hence at most $t'$ transitions can be deleted from
$p'$ to leave $t$ transitions in $p$. As there are at most $t \leq 2^{|F|} - 1$
transitions in $p$, there are at most $t$ transitions whose feature-sets
or actions can be substituted; moreover, there are $(2^{|F|} - 1) + |A|$ choices of
such substitutions. The number of choices of at most $c$ transition deletions and 
substitutions that can be made to $p'$ to create $p$ is thus at most


\begin{eqnarray*}
\sum^c_{i = 1} \sum^i_{j = 1} 
    \left( \begin{array}{c} t' \\ j \end{array} \right) 
    \left( \begin{array}{c} t \\ i - j \end{array} \right) 
    \left( \begin{array}{c} (2^{|F|} - 1) + |A| \\ i - j \end{array} \right) & \leq &
\sum^c_{i = 1} \sum^i_{j = 1} t'^j t^{i - j} ((2^{|F|} - 1) + |A|)^{i - j} \\
 & \leq & \sum^c_{i = 1} \sum^i_{j = 1} t'^i t^i ((2^{|F|} - 1) + |A|)^i \\
 & \leq & \sum^c_{i = 1} \sum^c_{j = 1} t'^c t^c ((2^{|F|} - 1) + |A|)^c \\
 & =    & \sum^c_{i = 1} c(t'^c t^c ((2^{|F|} - 1) + |A|)^c) \\
 & =    & c^2(t'^c t^c ((2^{|F|} - 1) + |A|)^c) \\
\end{eqnarray*}

\noindent
Each of these choices of deletions and substitutions can be applied in time 
polynomial in $c$ to see if they do in fact transform $p'$ into $p$. As $c \leq 
(2^{|F|} - 1) + (2^{|F|} - 1)$, this algorithm thus runs in time upper-bounded by some 
function of $|F|$ and $|A|$ times some polynomial of the instance size, which completes
the proof for LfDIncHist$^{pos}$ and LfDIncHist$^{neg}$.

Finally, the
algorithms above for LfDIncHist$^{pos}$ and LfDIncHist$^{neg}$  can be used to find 
possible policies $p'$ that are consistent with the given $p$ modulo $d_{new}$ if
one eliminates the step checking if $p'$ is consistent with $D \cup \{d_{new}\}$
and adds a step that checks each such policy $p'$ is consistent with $p$ modulo
$d_{new}$. As this new step only involves straightforward low-order polynomial
operations on the transitions in $p$ and $p'$ and the environment-state / action pairs
in $d_{new}$, this step can be done in low-order polynomial time. Hence,
the revised algorithm runs in time upper-bounded by some function 
of $|F|$ and $|A|$ times some polynomial of the instance size, which completes the 
proof for LfDIncNoHist$^{pos}$ and LfDIncNoHist$^{neg}$ as well as the proof as a whole.
\end{proof}

\begin{table}[p]
\centering
\begin{tabular}{ | p{0.7cm} || p{0.65cm}  | p{0.65cm} | p{0.65cm} | p{0.65cm}  || p{0.65cm} |}
\multicolumn{6}{c}{   } \\
\multicolumn{6}{c}{(d)} \\
\multicolumn{6}{c}{   } \\
\hline
& \multicolumn{5}{c|}{LfDIncNoHist$^{pos}$} \\
\cline{2-6}
& H(a) & H(b) & H(c) & H(d) & {\bf J} \\
\hline\hline
$|F|$       & -- & -- & -- & -- & {\bf @} \\ 
\hline
$|A|$       & 1  & 1  & 1  & 1  & {\bf @} \\
\hline\hline
$\#d$       & 1  & -- & 1  & -- & {\bf --} \\
\hline
$|d|$       & -- & 1  & -- & 1  & {\bf --} \\
\hline
$f_{eap}$   & -- & -- & 4  & 4  & {\bf --} \\
\hline\hline
$t$         & @  & @  & -- & -- & {\bf --} \\
\hline
$f_t$       & 1  & 1  & 1  & 1  & {\bf --} \\
\hline
$c$         & -- & -- & -- & -- & {\bf --} \\
\hline
\multicolumn{6}{c}{   } \\
\multicolumn{6}{c}{(e)} \\
\multicolumn{6}{c}{   } \\
\hline
& \multicolumn{5}{c|}{LfDIncNoHist$^{pos}$} \\
\cline{2-6}
& I(a) & I(b) & I(c) & I(d) & {\bf J} \\
\hline\hline
$|F|$       & -- & -- & -- & -- & {\bf @} \\ 
\hline
$|A|$       & 1  & 1  & 1  & 1  & {\bf @} \\
\hline\hline
$\#d$       & 1  & -- & 1  & -- & {\bf --} \\
\hline
$|d|$       & -- & 1  & -- & 1  & {\bf --} \\
\hline
$f_{eap}$   & -- & -- & 4  & 4  & {\bf --} \\
\hline\hline
$t$         & @  & @  & -- & -- & {\bf --} \\
\hline
$f_t$       & 1  & 1  & 1  & 1  & {\bf --} \\
\hline
$c$         & -- & -- & -- & -- & {\bf --} \\
\hline
\end{tabular}
\caption{
A detailed summary of our parameterized complexity results (Cont'd). d) Results
  for LfDIncNoHist$^{pos}$. e) Results for LfDIncNoHist$^{neg}$.
\label{TabPrmRes2}
}
\end{table}

\vspace*{0.15in}

\noindent
{\bf Result K}: LfDBat, LfDIncHist$^{pos}$, and LfDIncHist$^{neg}$
are all fp-tractable relative to parameter-set $\{|A|, \#d, |d|, f_{eap}\}$.

\vspace*{0.1in}

\begin{proof}
Follows from 
the algorithms given in the proof of Result H and the observation that, as both
$p$ and $p'$ can only use environment features in $D$ and $d_{new}$, $|F| \leq \#d 
\times |d| \times f_{eap}$.
\end{proof}

\vspace*{0.15in}

\noindent
Again, these results are more powerful than they first appear courtesy of Lemma
\ref{LemPrmProp2}, which establishes in conjunction with these results that
any set of parameters including both
$|F|$ and $|A|$ or all of $|A|$, $\#d$, $|d|$ and $f_{eap}$ can be restricted to yield 
tractability for all of our LfD problems and all of LfDBat, LfDIncHist$^{pos}$, and
LfDIncHist$^{neg}$, respectively.

\begin{table}[t]
\centering
\begin{tabular}{| c || c | c | c | c | c | c | c | c |}
\hline
             & --- & $\#d$   & $|d|$  & $f_{eap}$
             & $\#d, |d|$ & $\#d, f_{eap}$ & $|d|, f_{eap}$ 
             & $\#d, |d|, f_{eap}$ \\
\hline \hline
---                & NPh     & X       & X       & X           
                   & ???     & X       & X       & ???           \\
\hline
$t$                & X       & X       & X       & ???         
                   & ???     & ???     & ???     & ???           \\
\hline
$f_t$              & X       & X       & X       & X           
                   & ???     & X       & X       & ???           \\
\hline
$|F|$              & ???     & ???     & ???     & ???         
                   & ???     & ???     & ???     & ???           \\
\hline
$|A|$              & X       & X       & X       & X           
                   & ???     & X       & X       & $\surd$       \\
\hline
$t, f_t$           & X       & X       & X       & ???         
                   & ???     & ???     & ???     & ???           \\
\hline
$t, |F|$           & ???     & ???     & ???     & ???         
                   & ???     & ???     & ???     & ???           \\
\hline
$t, |A|$           & X       & X       & X       & ???         
                   & ???     & ???     & ???     & $\surd$       \\
\hline
$f_t, |F|$         & ???     & ???     & ???     & ???         
                   & ???     & ???     & ???     & ???           \\
\hline
$f_t, |A|$         & X       & X       & X       & ???         
                   & ???     & X       & X       & $\surd$       \\
\hline
$|F|, |A|$         & $\surd$ & $\surd$ & $\surd$ & $\surd$      
                   & $\surd$ & $\surd$ & $\surd$ & $\surd$       \\
\hline
$t, f_t, |F|$      & ???     & ???     & ???     & ???         
                   & ???     & ???     & ???     & ???           \\
\hline
$t, f_t, |A|$      & X       & X       & X       & ???         
                   & ???     & ???     & ???     & $\surd$       \\
\hline
$t, |F|, |A|$      & $\surd$ & $\surd$ & $\surd$ & $\surd$      
                   & $\surd$ & $\surd$ & $\surd$ & $\surd$       \\
\hline
$f_t, |F|, |A|$    & $\surd$ & $\surd$ & $\surd$ & $\surd$      
                   & $\surd$ & $\surd$ & $\surd$ & $\surd$       \\
\hline
$t, f_t, |F|, |A|$ & $\surd$ & $\surd$ & $\surd$ & $\surd$      
                   & $\surd$ & $\surd$ & $\surd$ & $\surd$       \\
\hline
\end{tabular}
\caption{Current Intractability Map for LfDBat.}
\label{TabIMLfDBat}
\end{table}

The intractability maps for our LfD problems relative to the results given above
are large, and hence we here only give the intractability map for LfDBat 
(Table \ref{TabIMLfDBat}). Though this map is partial, it has 70 out of 127
parameterized result cells filled, and this was accomplished courtesy of
Lemmas \ref{LemPrmProp1} and \ref{LemPrmProp2} using only 4 fp-intractability and
2 fp-tractability results. This very nicely demonstrates how the
rules of thumb for performing parameterized complexity analyses at the
end of Section \ref{SectPCA} can help minimize the effort of performing
these analyses.

\section{Discussion}

\label{SectDisc}

What do our results mean for LfD relative to the basic model investigated here? In 
addition to proving that batch LfD is not polynomial-time solvable in general, we have 
also shown that incremental LfD is not polynomial-time solvable in general when 
previously-encountered demonstrations either are or are not available. This 
intractability holds whether we require that requested policies are always output 
correctly (Result A) or that 
requested policies are always output correctly for inputs under a large number of both 
individual and simultaneous restrictions (Results E--I). These results suggest that it 
may be much more computationally difficult than is often realized to not only do LfD by
itself but also to use LfD as an initial generator of policies that are subsequently 
optimized by other techniques \cite[Sections 5.1 and 5.2]{HG+17}.
The incremental LfD results are particularly sobering, as certain
applications require that LfD be done very quickly with limited memory 
\cite[Page 5]{HG+17}, and incremental approaches that do not require
all previously-encountered demonstrations to remain available seem the best hope for
achieving this. That general solvability is not possible using efficient
probabilistic algorithms (Result C) may also be problematic, given the increasing popularity
of statistical-inference-based approaches to LfD, e.g., \cite{CKD12,CLO16}.

That being said, the various applications mentioned above may yet be guaranteed to
run efficiently by exploiting various restrictions. Our results to date suggest that it
is most important to restrict the environment. This can be done either directly by 
restricting the available features and actions in the environment ($\{|F|,|A|\}$; 
Result J) or indirectly by restricting the structure of the given demonstrations
($\{|A|, \#d, |d|, f_{eap}\}$; Result K). The latter is particularly exciting, as it
shows that LfD can be done efficiently relative to few given demonstrations as long as 
these demonstrations are also small in the sense of having few environment-state / 
action pairs which invoke few features and actions. Given that it is desirable that 
LfD be done efficiently with few demonstrations, i.e., when $\#d$ is restricted 
\cite[Page 475]{AC+09}, it would be very useful if fp-tractability held relative
to a small subset of $\{|A|, \#d, |d|, f_{eap}\}$ that includes $\#d$ (in the best case,
$\#d$ by itself). Parts (c) and (d) of Results E, F, and G rule out the 
possibility of fast LfD relative to many such subsets, including $\{\#d\}$. Just
how many (if any) of the parameters in $\{|A|, \#d, |d|, f_{eap}\}$ can be removed
while retaining fp-tractability is thus a very important open question.

This highlights the fact that any conclusions drawn from our parameterized
results must for now be tentative, as we have not yet characterized the parameterized 
complexity of all possible subsets of the parameters given in Table \ref{TabPrm}.
For example, the extent of our knowledge of the parameterized status of LfDBat 
relative to the parameters examined in this paper is brought home by the partial
intractability map in Table \ref{TabIMLfDBat}.
Tractability may lurk in the uncharacterized subsets in this map and the intractability
maps for our other LfD problems. For
example, tractability may hold relative to certain sets of restrictions on
policies (with sets including the number $c$ of allowable policy-transformation 
changes relative to problems LfDIncNoHist$^{pos}$ and LfDIncNoHist$^{neg}$ being
particularly tantalizing). Tractability might also be obtained using
parameters not considered here. Possible candidates include
parameters enforcing a high degree of similarity between time-adjacent 
environment-states in demonstrations (which seems to hold in real-world demonstrations)
or characterizing the requested degree of compression of the information in the
given demonstrations by a policy in
relative rather than absolute terms, i.e., as a ratio of given demonstration-set size
to policy size rather than (as is done here) just policy size alone. Additional
parameters may be suggested by aspects of real-world LfD instances whose values are
typically small as well as constraints invoked in cognitive systems, e.g., children 
learning by imitation \cite{BG+11,Mel05,SIB03}.

All of this is most intriguing for the basic model of LfD analyzed in this paper. 
However, what (if anything) do our results have to say about more complex models of LfD
invoked in practice? Given that real-world LfD often uses continuous rather than 
discrete demonstrations and infers a number of types of policies such as decision trees,
hidden Markov models, and Gaussian mixture models \cite[Page 789]{CKD12}, the answer 
may initially seem to be ``not much at all''. We agree that our analysis is not 
immediately applicable in these cases and are for now merely suggestive of conditions 
under which tractability and intractability may hold. However, they may be applicable 
in future in several ways:

\begin{itemize}
\item If our basic model of LfD is a special case of a more complex model of LfD, all 
       of our intractability results also apply to that more complex model (as any 
       algorithm for the more complex model would also work for our basic model, 
       making our basic model tractable and thus causing a contradiction). 
\item If this is not the case, the techniques invoked in both our tractability and 
       intractability results may suggest ways of deriving algorithms and intractability
       results for more complex models. 
\item If even this does not hold, doing classical and parameterized complexity analyses
       like those presented here may still be worthwhile, in order to characterize more 
       accurately those situations under which fast learning is and is not possible 
       relative to these more complex models. This would enable the invocation of
       LfD in more situations with greater confidence. 
\end{itemize}

\noindent
Given this potential utility of our analysis in terms of either results, proof 
techniques, or analytical frameworks, these analyses should thus be seen not 
as an endpoint but rather the start of a (hopefully ongoing  and productive) 
conversation between LfD researchers and computational complexity analysts.

\section{Conclusions}

\label{SectConc}

In this paper, we have shown how parameterized complexity analysis can be used
to systematically explore the algorithmic options for efficient and reliable
machine learning. As an illustrative example, we gave the first first parameterized
complexity analysis of batch and incremental policy inference 
under learning from demonstration (LfD). These analyses were done relative 
to a basic model of LfD which uses discrete feature-based positive and negative 
demonstrations and time-independent policies specified as single-state transducers.
Relative to this basic model, we showed that none of our LfD problems can be solved
efficiently either in general or relative to a number of (often simultaneous) 
restrictions on environments, demonstrations, and policies. We also gave the first
known restrictions under which efficient solvability is possible and discussed the
implications of our solvability and unsolvability results for both our basic model
of LfD and more complex models of LfD used in practice. 

There are several promising directions for future research, both with respect
to LfD in particular and machine learning in general. With respect to LfD,
in addition to completing the characterization of the parameterized complexity of 
the LfD problems defined in this paper relative to the parameters given in Table 
\ref{TabPrm}, analyses
should be extended to more complex models of LfD. This includes not only the more 
complex types of demonstrations and policies mentioned in Section \ref{SectDisc} 
but also more complex LfD inference problems, e.g., LfD in which learners 
interactively receive critiques from teachers \cite{ABV07} or request useful 
positive and/or negative demonstrations \cite{CV08}. More generally, given the 
popularity of statistical-inference approaches in machine
learning, it would be of great interest to extend our
parameterized exploration of algorithmic options to include probabilistic as well as
deterministic algorithms. Initial steps in this direction have already been made
for other problems \cite{BKW+13,Kwi15,MM13} and await application to problems from
machine learning. All told,
we believe that there is much that parameterized complexity analysis has 
to offer researchers in machine learning, and hope that the techniques and
analyses given in this paper are a useful first step in this endeavour.

\section*{Acknowledgments}
The author would like to thank Ting Hu and Lourdes Pe\~{n}a-Castillo
for comments on an earlier version of this paper. He would also like to acknowledge 
support for this project from National Science and Engineering Research Council 
(NSERC) Discovery Grant 228104-2015.


\appendix
\section*{Appendix A: Proofs of Results}
\label{SectProof}


In this appendix, we give proofs of various results stated in the main text
that were not given in the main text.

\begin{lemma}
{\sc Dominating set$^{PD3}_D$} polynomial-time reduces to LfDIncHist$^{pos}_D$ such that 
in the constructed instance LfDIncHist$^{pos}_D$, $|A| = \#d = 2$, $f_t = 1$, $f_{eap} =
4$, and $t$ and $c$ are functions of $k$ in the given instance of {\sc Dominating set$^{PD3}_D$}.
\label{LemRedDS_LfDIncHistPos2}
\end{lemma}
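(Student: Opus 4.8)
The plan is to follow the template of Lemma~\ref{LemRedDS_LfDBat2}: because {\sc Dominating set$^{PD3}_D$} is a special case of {\sc Dominating set}$_D$, the reduction of Lemma~\ref{LemRedDS_LfDIncHistPos1} is already a polynomial-time reduction from {\sc Dominating set$^{PD3}_D$}, and the degree-$3$ restriction bounds $|N_C(v)|$ by $4$ for every vertex $v$. A direct reuse, however, is not quite sufficient, since the construction of Lemma~\ref{LemRedDS_LfDIncHistPos1} inserts the marker feature $f_x$ into every demonstration state: on a degree-$3$ graph the states $s_i$ (each consisting of $f_x$ together with the up-to-four features corresponding to $N_C(v_i)$) then have size up to $5$, giving $f_{eap} = 5$ rather than the claimed $4$. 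So the first step is to rework the reduction so that no feature beyond those coding $N_C(v_i)$ appears in the states drawn from $G$.

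The idea is to let the universally-triggering empty transition play the role of $f_x$. Concretely: set $F = \{f_1,\ldots,f_{|V|},f_y\}$ and $A = \{a_1,a_2\}$; let $D$ consist of (a single positive demonstration containing) the pairs $(s_i,a_1)$, where $s_i$ is the set of features corresponding to $N_C(v_i)$; let the input policy $p$ have $t = k+1$ transitions, namely $(\emptyset,a_1)$ together with $k$ transitions $(\{f_i\},a_1)$ for randomly chosen vertices; and set $d_{new} = (pos,\la(\{f_y\},a_2)\ra)$, $c = k+2$, and $f_t = 1$. As in Lemma~\ref{LemRedDS_LfDIncHistPos1}, $p$ is valid for and consistent with $D$ because $(\emptyset,a_1)$ already produces the correct action $a_1$ on every state, and the construction is plainly polynomial time.

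Correctness then mirrors the two-part argument of Lemma~\ref{LemRedDS_LfDIncHistPos1}. For the forward direction, a dominating set $V'$ of size $l \le k$ yields a policy $p'$ obtained by turning $(\emptyset,a_1)$ into $(\{f_y\},a_2)$ (two changes), retargeting $l$ of the random transitions onto the features of $V'$ (one change each), and deleting the remaining $k-l$ transitions, for a total of $2 + l + (k-l) = k+2 = c$ changes; $p'$ has $l+1 \le t$ transitions, each triggered by at most $f_t = 1$ feature, and is valid for and consistent with $D \cup \{d_{new}\}$. For the converse, any valid-and-consistent $p'$ with at most $t = k+1$ transitions must contain a transition producing $a_2$ that triggers on $\{f_y\}$; since $f_t = 1$ this transition is $(\{f_y\},a_2)$ or $(\emptyset,a_2)$, and the latter is impossible because $(\emptyset,a_2)$ would also fire on the states of $D$ and force the wrong action there. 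Hence $(\{f_y\},a_2) \in p'$, so $(\emptyset,a_1) \notin p'$ (validity on the state $\{f_y\}$ of $d_{new}$ would otherwise fail), and no size-at-most-one transition carrying $a_2$ other than $(\{f_y\},a_2)$ can occur (it would fire on some $s_j$ and break consistency on $D$); therefore the action $a_1$ required on the states of $D$ is supplied by the at most $k$ remaining transitions, all necessarily of the form $(\{f_i\},a_1)$, whose vertices must form a dominating set of size at most $k$ in $G$.

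Finally I would verify the advertised parameter values: $|A| = 2$ and $f_t = 1$ hold by construction, $t = k+1$ and $c = k+2$ are functions of $k$, $\#d = 2$ exactly as in Lemma~\ref{LemRedDS_LfDIncHistPos1} (invoking Lemma~\ref{LemPropDemo} if one prefers to present $D$ as several single-pair demonstrations), and---the point of this lemma---$f_{eap} = 4$, since each state of $D$ has at most $|N_C(v_i)| \le 4$ features and the single state of $d_{new}$ has one. I expect the only genuine obstacle to be this last point: one must make sure that deleting $f_x$ from the $G$-derived states does not open a loophole in the converse direction, which is precisely why the argument above has to rule out an adversarial policy that exploits an empty-feature-set transition carrying action $a_2$.
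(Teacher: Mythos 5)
Your proof is correct, and it takes a genuinely different route from the paper's. The paper's own proof of Lemma \ref{LemRedDS_LfDIncHistPos2} is exactly the two-line argument you anticipated: reuse the reduction of Lemma \ref{LemRedDS_LfDIncHistPos1} verbatim and observe that complete neighbourhoods in a degree-3 graph have at most 4 vertices. You are right that this overlooks the marker feature $f_x$, which that construction inserts into every demonstration state, so direct reuse actually only yields $f_{eap} = 5$ (a harmless slip downstream, since Result F(c,d) only needs $f_{eap}$ bounded by a constant, but the lemma as stated claims 4). Your modification---dropping $f_x$, seeding $p$ with the universally-triggering transition $(\emptyset, a_1)$ so that $p$ stays valid for and consistent with $D$ no matter which $k$ random transitions are chosen, and placing $d_{new}$ on the fresh one-feature state $\{f_y\}$---delivers $f_{eap} = 4$ exactly as stated, and your converse correctly closes the new loophole: validity on the state $\{f_y\}$ excludes $(\emptyset,a_1)$ (and $(\{f_y\},a_1)$), while consistency on the states $s_i$ excludes $(\emptyset,a_2)$ and $(\{f_i\},a_2)$, so the $a_1$-producing transitions must be singletons over graph features and hence encode a dominating set of size at most $k$. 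The one caveat is that your construction leans on transitions with empty triggering feature-sets being legal; the paper's formalization never forbids them ($F^t \subseteq s$ holds vacuously and $0 \leq f_t$), but it never uses them either, so you should either flag that explicitly or, if one insists on nonempty triggers, fall back to the paper's direct-reuse argument with the weaker constant $f_{eap} = 5$, which still suffices for all the fp-intractability consequences.
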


\begin{proof}
As {\sc Dominating set$^{PD3}_D$} is a special case of {\sc Dominating set}$_D$, the reduction
in Lemma \ref{LemRedDS_LfDIncHistPos1} from {\sc Dominating set}$_D$ to 
LfDIncHist$^{pos}_D$ is also a reduction from {\sc Dominating set$^{PD3}_D$} to 
LfDIncHist$^{pos}_D$ that constructs instances of LfDIncHist$^{pos}_D$ such that 
$|A| = \#d = 2$, $f_t = 1$ and $c$ and $t$ are functions of $k$ in the given instance of
{\sc Dominating set$^{PD3}_D$}. To complete the proof, note that as the degree of each 
vertex in graph $G$ in the given instance of {\sc Dominating set$^{PD3}_D$} is at most 3,
the size of each complete vertex neighbourhood is of size at most 4, which means that
$f_{eap} = 4$ in each constructed instance of LfDIncHist$^{pos}_D$.
\end{proof}

\begin{lemma}
{\sc Dominating set}$_D$ polynomial-time reduces to LfDIncHist$^{neg}_D$ such that in the 
constructed instance LfDIncHist$^{neg}_D$, $|A| = f_t = 1$, $\#d = 2$, and
$t$ and $c$ are functions of $k$ in the given instance of {\sc Dominating set}$_D$.
\label{LemRedDS_LfDIncHistNeg1}
\end{lemma}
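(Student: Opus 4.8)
The plan is to adapt the reduction of Lemma~\ref{LemRedDS_LfDIncHistPos1} to the negative-demonstration setting, exploiting the fact that a negative $d_{new}$ \emph{forbids} an action rather than \emph{requiring} one; this is exactly what lets the construction use a single action and so get $|A| = 1$.

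Given an instance $\langle G = (V,E), k\rangle$ of {\sc Dominating set}$_D$, I would construct an instance $\langle D, p, d_{new}, c, t, f_t\rangle$ of LfDIncHist$^{neg}_D$ as follows. Let $F = \{f_1, \ldots, f_{|V|}, f_x\}$ and $A = \{a\}$. Let $D$ consist of positive single state/action pairs $(s_i, a)$, one per vertex $v_i$, where $s_i$ is the feature-set encoding the complete neighbourhood $N_C(v_i)$ of $v_i$ together with $f_x$, so that every state in $D$ contains $f_x$; by Lemma~\ref{LemPropDemo} these may be taken as one positive demonstration, which together with $d_{new}$ gives $\#d = 2$. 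Let $p$ have exactly $t = k$ transitions: a ``catch-all'' transition $(\{f_x\}, a)$ together with $k-1$ transitions $(\{f_i\}, a)$ for arbitrarily chosen vertices $v_i$; since every transition produces $a$, $p$ is valid for $D$, and since $(\{f_x\}, a)$ triggers on every $s_i$, $p$ is consistent with $D$. Finally let $d_{new} = (neg, \langle(\{f_x\}, a)\rangle)$ (which lies outside $D$, being negative), $c = k$, and $f_t = 1$. This is clearly computable in polynomial time and has the claimed parameter values.

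For correctness, in the forward direction, given a dominating set $V' = \{v'_1, \ldots, v'_l\}$ with $l \le k$, I would build $p'$ with the $l$ transitions $(\{f'_j\}, a)$, where $f'_j$ is the feature for $v'_j$. It is obtained from $p$ by substituting feature-sets on $l$ of its transitions and deleting the other $k - l$, i.e.\ by $k = c$ changes; it is valid for $D \cup \{d_{new}\}$ since all transitions produce $a$; it is consistent with $D$ because $V'$ dominates $G$; and it is consistent with the negative demonstration $d_{new}$ because no vertex-feature transition triggers on the state $\{f_x\}$, so $p'$ produces no action there. In the converse direction, given any admissible $p'$: consistency with $d_{new}$ forces that no transition of $p'$ triggers on $\{f_x\}$, so (using $|A| = 1$ and $f_t = 1$) every transition of $p'$ has the form $(\{f_g\}, a)$ with $f_g$ a vertex-feature distinct from $f_x$, the empty triggering set being ruled out as well; consistency with $D$ then forces, for each $s_i$, some such $f_g$ with $v_g \in N_C(v_i)$; hence the vertices indexing the transitions of $p'$ form a dominating set of size at most $t = k$, completing the correctness argument. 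The claimed parameter bounds ($|A| = f_t = 1$, $\#d = 2$, $t = c = k$) then follow by inspection.

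The routine parts are the polynomial-time bound and the parameter bookkeeping; the points requiring care are (i) choosing $t = c = k$ rather than $k+1$ and $k+2$ as in the positive case --- this is feasible precisely because dropping the second action removes the need for the extra ``$f_y/a_2$'' transition gadget, so the forward construction still fits within $c$ changes while the converse still yields a size-$\le k$ dominating set --- and (ii) the converse argument that the negative demonstration forces deletion or re-targeting of the catch-all transition $(\{f_x\}, a)$ and, crucially, also rules out an empty-triggering-set transition that would otherwise fire on $\{f_x\}$. I expect (ii) to be the only mildly delicate step.
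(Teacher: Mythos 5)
Your proposal is correct and is essentially the paper's own reduction: the same $F$, $A=\{a\}$, the same $D$ built from the $f_x$-augmented complete neighbourhoods, the same policy $p$ with the catch-all $(\{f_x\},a)$ transition, the same $d_{new}=(neg,\la(\{f_x\},a)\ra)$, and the same choices $t=c=k$, with the same two-direction correctness argument. Your explicit appeal to Lemma \ref{LemPropDemo} for the $\#d=2$ bookkeeping and your remark ruling out an empty triggering set are minor additions of care beyond what the paper states, not a different approach.
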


\begin{proof}
Given an instance $\la G = (V,E), k \ra$ of {\sc Dominating Set}$_D$, construct an instance
$\la D, p, d_{new}, c, t, f_t \ra$ of LfDIncHist$^{neg}_D$ as follows: Let $F = \{f_1,
f_2, \ldots,$ $f_{|V|}, f_x\}$, $A = \{a\}$, and $D$ be as in the reduction in the proof
 of Lemma \ref{LemRedDS_LfDIncHistPos1}. Let $p$ have $t = k$ transitions, where the
first transition is $(\{f_x\}, a)$ and the remaining $k - 1$ transitions have the form
$(\{f_i\}, a)$ where $f_i$ is the feature corresponding to a randomly selected
vertex in $G$. Finally, let $d_{new} = (neg, \la (\{f_x\}, a)\ra)$, $c = k$ and $f_t =
1$. Note that $p$ is valid for $D$ (as all transitions produce the same action) and 
consistent with $D$ (as the first transition in $T$ will always generate the correct 
action $a$ for each demonstration in $D$). Observe that this construction can be done
in time polynomial in the size of the given instance of {\sc Dominating set}$_D$.

We shall prove the correctness of this reduction in two parts. First, suppose that there
is a subset $V' = \{v'_1, v'_2, \ldots v'_l\} \subseteq V$, $l \leq k$, that is a 
dominating set in $G$. Construct a policy $p'$ with $l$ transitions which 
have the form $(\{f_i\}, a)$ for each $v'_i \in V'$. Observe that
$p'$ can be derived from $p$ by at most $c = k$ changes to $p$ (namely, change the 
triggering feature-sets of the first $l$ transitions as necessary and delete the final
$k - l$ transitions) and that $p'$ is valid for
$D \cup \{d_{new}\}$ (as all transitions produce the same action). As $V'$ is a 
dominating set in $G$ and the state in each demonstration in $D$ corresponds to the 
complete neighbourhood of one of the vertices in $G$, $p'$ will produce the correct 
action for every demonstration in $D$ and hence is consistent with $D$. Moreover, 
$p'$ cannot produce the action forbidden by $d_{new}$ for state $\{f_x\}$,
which means that $p'$ is consistent with $D \cup \{d_{new}\}$.

Conversely, suppose there is a policy $p'$ derivable from $p$ by at most $c$ changes
that is valid for and consistent with $D \cup \{d_{new}\}$ and has $l \leq t = k$ 
transitions, each of which is triggered by a set of at most $f_t$ features. None of
these transitions can produce action $a$ on state $\{f_x\}$ in order for $p'$ to be 
consistent with $d_{new}$; moreover, the $l$ transitions in $p'$ must produce
action $a$ for all states in $D$ in order for $p'$ to be consistent with 
$D$. As $f_t = 1$ and the states in the demonstrations in $D$ correspond to the complete
neighborhoods of the vertices in $G$, the set of features triggering these $l$ 
transitions in $p$ must correspond to a dominating set of size at most $k$ in $G$. 

To complete the proof, observe that in the constructed instance of 
LfDIncHist$^{neg}_D$, $|A| = f_t = 1$, $\#d = 2$, and $c = t = k$.
\end{proof}

\begin{lemma}
{\sc Dominating set$^{PD3}_D$} polynomial-time reduces to LfDIncHist$^{neg}_D$ such that 
in the constructed instance LfDIncHist$^{neg}_D$, $|A| = f_t = 1$, $\#d = 2$, 
$f_{eap} = 4$, and $t$ and $c$ are functions of $k$ in the given instance of 
{\sc Dominating set$^{PD3}_D$}.
\label{LemRedDS_LfDIncHistNeg2}
\end{lemma}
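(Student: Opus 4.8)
The plan is to reuse, essentially verbatim, the reduction of Lemma~\ref{LemRedDS_LfDIncHistNeg1}, following the same two-step ``specialization'' template already applied in Lemmas~\ref{LemRedDS_LfDBat2} and~\ref{LemRedDS_LfDIncHistPos2}. First I would observe that {\sc Dominating set$^{PD3}_D$} is by definition a restriction of {\sc Dominating set}$_D$ (to planar instances of maximum degree~3), so the polynomial-time reduction of Lemma~\ref{LemRedDS_LfDIncHistNeg1}, when fed such a restricted input, is already a polynomial-time reduction from {\sc Dominating set$^{PD3}_D$} to LfDIncHist$^{neg}_D$; it produces instances in which $|A| = f_t = 1$, $\#d = 2$, and $t = c = k$, exactly as shown there, so neither the construction nor its correctness argument needs to be revisited.

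The only new ingredient is the bound on $f_{eap}$. In that reduction the state of the $i$-th demonstration in $D$ corresponds to the complete neighbourhood $N_C(v_i)$ of $v_i$ in $G$, and the single state of $d_{new}$ is $\{f_x\}$. Since every vertex of $G$ has degree at most~3, each complete neighbourhood has size at most~4, so every demonstration state in the constructed instance contains only a constant number of features; this is what yields $f_{eap} = 4$. Combining this with the parameter values inherited from Lemma~\ref{LemRedDS_LfDIncHistNeg1} gives the statement.

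I do not expect any real obstacle here: the lemma is a routine corollary of Lemma~\ref{LemRedDS_LfDIncHistNeg1}, and the lone thing to verify is the elementary fact that bounded degree forces bounded complete-neighbourhood size and hence bounded demonstration-state size. The point of isolating it is that, via Lemma~\ref{LemProp2}, having $f_{eap}$ fixed to a constant inside an $NP$-hard reduction is precisely what is needed to obtain fp-intractability of LfDIncHist$^{neg}$ relative to parameter-sets containing $f_{eap}$ (parts~(c) and~(d) of Result~G), as opposed to the weaker $W[1]$-hardness one gets from the unrestricted reduction.
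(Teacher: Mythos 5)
Your proposal is correct and takes essentially the same route as the paper's own proof: it reuses the reduction of Lemma~\ref{LemRedDS_LfDIncHistNeg1} unchanged as a reduction from {\sc Dominating set$^{PD3}_D$} and adds only the observation that maximum degree~3 bounds each complete neighbourhood by~4, yielding $f_{eap}=4$ alongside the inherited values $|A|=f_t=1$, $\#d=2$, $t=c=k$. The one gloss you share with the paper is that the demonstration states in $D$ (taken from Lemma~\ref{LemRedDS_LfDIncHistPos1}) also contain $f_x$ in addition to the neighbourhood features, so strictly each state has at most five features; this does not affect the use of the lemma, since $f_{eap}$ is still bounded by a small constant.
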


\begin{proof}
As {\sc Dominating set$^{PD3}_D$} is a special case of {\sc Dominating set}$_D$, the reduction
in Lemma \ref{LemRedDS_LfDIncHistNeg1} from {\sc Dominating set}$_D$ to 
LfDIncHist$^{neg}_D$ is also a reduction from {\sc Dominating set$^{PD3}_D$} to 
LfDIncHist$^{neg}_D$ that constructs instances of LfDIncHist$^{neg}_D$ such that $|A| =
f_t = 1$, $\#d = 2$, and $c$ and $t$ are functions of $k$ in the given instance of
{\sc Dominating set$^{PD3}_D$}. To complete the proof, note that as the degree of each 
vertex in graph $G$ in the given instance of {\sc Dominating set$^{PD3}_D$} is at most 3,
the size of each complete vertex neighbourhood is of size at most 4, which means that
$f_{eap} = 4$ in each constructed instance of LfDIncHist$^{neg}_D$.
\end{proof}

\begin{lemma}
{\sc Dominating set}$_D$ polynomial-time reduces to LfDIncNoHist$^{pos}_D$ such that in the 
constructed instance LfDIncNoHist$^{pos}_D$, $|A| = \#d = f_t = 1$, and
$t$ is a function of $k$ in the given instance of {\sc Dominating set}$_D$.
\label{LemRedDS_LfDIncNoHistPos1}
\end{lemma}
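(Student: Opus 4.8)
The plan is to adapt the reduction of Lemma~\ref{LemRedDS_LfDIncHistPos1}, with the demonstration-set $D$ --- whose role there is to force the reconstructed policy to ``cover'' every complete vertex neighbourhood --- replaced by clause~1 of the definition of policy-policy consistency, which forces the derived policy $p'$ to reproduce the behaviour of the given policy $p$ on the triggering feature-set of every transition of $p$. Since LfDIncNoHist$^{pos}$ carries no demonstration-set on which a ``background'' action must be preserved, the action-set can moreover be collapsed to a single action, so that the constructed instances have $|A| = 1$.

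Given an instance $\la G = (V,E), k \ra$ of {\sc Dominating set}$_D$, I would construct an instance $\la p, d_{new}, t, f_t, c \ra$ of LfDIncNoHist$^{pos}_D$ as follows. Put $F = \{f_1, \ldots, f_{|V|}, f_y\}$ with $f_y$ a fresh feature, and $A = \{a\}$. Let $p$ consist of one transition $(N^F_C(v_i), a)$ for each $v_i \in V$, where $N^F_C(v_i) = \{f_j : v_j \in N_C(v_i)\}$ encodes the complete neighbourhood of $v_i$, together with one additional transition $(\{f_y\}, a)$. Let $d_{new} = (pos, \la (\{f_y\}, a) \ra)$, $t = k + 1$, $f_t = 1$, and $c = |V|$. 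This construction is clearly polynomial-time, and by inspection $|A| = \#d = f_t = 1$ while $t$ is a linear function of $k$.

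For correctness I would argue both directions, paralleling Lemma~\ref{LemRedDS_LfDIncHistPos1}. If $V' \subseteq V$ with $|V'| = l \le k$ dominates $G$, let $p'$ have the transitions $(\{f_j\}, a)$ for $v_j \in V'$ together with $(\{f_y\}, a)$; this has $l + 1 \le t$ transitions, each with $f_t = 1$ feature, and is obtained from $p$ by relabelling the triggering feature-sets of $l$ of its vertex-transitions, keeping the $f_y$-transition, and deleting the remaining $|V| - l$ vertex-transitions --- at most $|V| = c$ changes. Clause~2 holds since $p'$ produces $a$ on the only state $\{f_y\}$ of $d_{new}$; clause~1 holds since the only triggering feature-sets of $p$ not contained in $\{f_y\}$ are the $N^F_C(v_i)$, on each of which $p$ produces $a$ and, because $V'$ dominates $v_i$, so does $p'$. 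Conversely, suppose $p'$ solves the constructed instance. Since $f_t = 1$ (and triggering patterns are non-empty, following the convention already used in the proof of Lemma~\ref{LemRedDS_LfDBat1}) and $p'$ is obtained from $p$ by feature-set/action substitutions and transition deletions, every transition of $p'$ has the form $(\{g\}, a)$ with $g \in F$. Clause~2 forces $(\{f_y\}, a) \in p'$, so at most $t - 1 = k$ transitions of $p'$ have the form $(\{f_j\}, a)$ with $j \le |V|$; let $V'$ be the corresponding set of vertices, so $|V'| \le k$. Clause~1 applied to each $s = N^F_C(v_i)$ (which is not a subset of $\{f_y\}$, hence is constrained) forces $p'$ to produce $a$ on $s$, and since the $f_y$-transition does not trigger on $s$, some $(\{f_j\}, a) \in p'$ has $f_j \in N^F_C(v_i)$, i.e.\ $v_j \in N_C(v_i)$; thus $V'$ dominates $v_i$. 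Hence $V'$ is a dominating set of size at most $k$, and, recalling $t = k+1$, the two directions establish the reduction with the stated parameter values.

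I expect the only genuinely delicate point to be the bookkeeping around the quantifier in clause~1 (``$s$ is a triggering feature-set of some transition of the old policy $p$ with $s \not\subseteq s'$ for some $(s',a) \in d_{new}$''): one must verify that the $f_y$-transition of $p$ is exempted --- so that no spurious constraint is placed on $\{f_y\}$, which is instead discharged by clause~2 --- while every neighbourhood transition $N^F_C(v_i)$ is not exempted, so that every vertex must be dominated, and that substitutions cannot introduce feature-sets outside $F$ or empty triggering patterns that would let a single transition trivially satisfy both clauses. Everything else (the polynomial-time bound; the absence of any validity requirement on $p$ or $p'$ in LfDIncNoHist; the trivial values $\#d = 1$, $|A| = 1$) is routine.
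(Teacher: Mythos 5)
Your reduction is correct and is essentially the paper's own: the given policy $p$ encodes the complete vertex neighbourhoods as triggering feature-sets, $d_{new}$ is a single positive pair on a fresh feature, $f_t = 1$ forces the surviving transitions to be feature singletons, and $t = k+1$ ties the transition budget to the dominating-set size, with clause 1 of policy-policy consistency playing exactly the role you describe. The only differences are cosmetic --- the paper's $p$ carries an arbitrary distinct extra transition and uses $c = |V|+1$ (substituting that transition's feature-set to the fresh feature) where you keep an explicit $(\{f_y\},a)$ transition and use $c = |V|$ --- and your handling of the clause-1 exemption for the fresh-feature transition and of the nonempty-trigger convention matches (indeed slightly sharpens) the paper's argument.
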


\begin{proof}
Given an instance $\la G = (V,E), k \ra$ of {\sc Dominating Set}$_D$, construct an instance
$\la p, d_{new}, c, t, f_t \ra$ of LfDIncNoHist$^{pos}_D$ as follows: Let $F = \{f_1,
f_2, \ldots, f_{|V|},$ $f_x\}$, $A = \{a\}$, $p$ have $|V| + 1$ transitions 
such that the $i$th, $1 \leq i \leq |V|$, transition has a triggering feature-set 
consisting of the features in $F$ corresponding to the complete neighbourhood of $v_i$ 
in $G$ and action $a$ and the final transition has a triggering feature-set that is an 
arbitrary subset of the features of $F$ that is not the same as the triggering 
feature-set of any previous transition and action $a$, $d_{new} = (pos, \la (\{f_x\}, 
a)\ra)$, $c = |V| + 1$, $t = k + 1$, and $f_t = 1$.  Observe that this 
construction can be done in time polynomial in the size of the given instance of 
{\sc Dominating set}$_D$.

We shall prove the correctness of this reduction in two parts. First, suppose that there
is a subset $V' = \{v'_1, v'_2, \ldots v'_l\} \subseteq V$, $l \leq k$, that is a 
dominating set in $G$. Construct a policy $p'$ with $l + 1$ transitions in which
the first transition is $(\{f_x\}, a)$ and the subsequent $l$ transitions have the
form $(\{f_i\}, a)$ for each $v'_i \in V'$. Observe that $p'$ can be derived from $p$
by at most $c$ changes to $p$ (namely, change the feature-sets
of the first $l + 1$ transitions as necessary
and delete the final $(|V| + 1) - (l + 1)$ transitions).
As $V'$ is a dominating set in $G$ and the triggering feature-set in each
transition in $p$ corresponds to the complete neighbourhood of one of the vertices in
$G$, $p'$ will produce the correct action for the triggering feature-set associated
with each transition in $p$; moreover, the first transition in $p'$ produces the correct 
action for $d_{new}$, which means that $p'$ is consistent with $p$ modulo $d_{new}$.

Conversely, suppose there is a policy $p'$ derivable from $p$ by at most $c$ changes 
that is consistent with $p$ modulo $d_{new}$ and has $l \leq t = k + 1$ transitions, 
each of which is triggered by a set of at most $f_t$ features. One of these 
transitions must produce action $a$ relative to $f_x$ in order for $p'$ to be 
consistent with $d_{new}$; moreover, as $f_x$ does not occur in the triggering 
feature-set of any transition in $p$, the remaining $l - 1$ transitions in $p'$ must
produce action $a$ for all of these triggering feature-sets in order for $p'$ to be
consistent with $p$. As $f_t = 1$ and the triggering feature-sets in the transitions in $p$ 
correspond to the complete neighborhoods of the vertices in $G$, the set of features 
triggering the final $l - 1$ transitions in $p'$ must correspond to a dominating set 
of size at most $k$ in $G$. 

To complete the proof, observe that in the constructed instance of  
LfDIncNoHist$^{pos}_D$, $|A| = \#d = f_t = 1$ and $t = k + 1$.
\end{proof}

\begin{lemma}
{\sc Dominating set$^{PD3}_D$} polynomial-time reduces to LfDIncNoHist$^{pos}_D$ such that
in the constructed instance LfDIncNoHist$^{pos}_D$, $|A| = \#d = f_t = 1$, $f_{eap} = 
4$, and $t$ is a function of $k$ in the given instance of {\sc Dominating set$^{PD3}_D$}.
\label{LemRedDS_LfDIncNoHistPos2}
\end{lemma}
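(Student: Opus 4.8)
The plan is to reuse the template of the earlier ``version~2'' reduction lemmas (Lemmas \ref{LemRedDS_LfDBat2}, \ref{LemRedDS_LfDIncHistPos2}, and \ref{LemRedDS_LfDIncHistNeg2}). Since {\sc Dominating set$^{PD3}_D$} is simply {\sc Dominating set}$_D$ restricted to planar graphs of maximum degree $3$, the polynomial-time reduction from {\sc Dominating set}$_D$ to LfDIncNoHist$^{pos}_D$ constructed in the proof of Lemma \ref{LemRedDS_LfDIncNoHistPos1} is, without any modification, also a polynomial-time reduction from {\sc Dominating set$^{PD3}_D$} to LfDIncNoHist$^{pos}_D$. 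First I would observe that this inherited reduction still produces instances in which $|A| = \#d = f_t = 1$ and $t = k + 1$ is a function of the parameter $k$ of the source instance, exactly as established in Lemma \ref{LemRedDS_LfDIncNoHistPos1}; restricting the class of input graphs changes none of these quantities.

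The only genuinely new claim is the bound $f_{eap} = 4$, and I would obtain it simply by inspecting the construction of Lemma \ref{LemRedDS_LfDIncNoHistPos1}: the single demonstration it produces is $d_{new} = (pos, \la (\{f_x\}, a) \ra)$, whose only environment-state $\{f_x\}$ contains exactly one feature, so in fact $f_{eap} = 1 \le 4$ in every constructed instance. Thus the degree-$3$ hypothesis is not even needed to bound $f_{eap}$ here (one could equally well quote the reduction from unrestricted {\sc Dominating set}$_D$ and still get $f_{eap} = 1$); restricting to {\sc Dominating set$^{PD3}_D$} serves only to keep this lemma parallel to its counterparts for the other LfD problems, so that the $NP$-hardness of {\sc Dominating set$^{PD3}_D$} can be fed into Lemma \ref{LemProp2} in the proof of Result H, parts (c) and (d), in the same uniform way. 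The degree bound does cap, at $4$, the sizes of the complete-neighbourhood feature-sets labelling the transitions of the \emph{input} policy $p$, but since the input integer $f_t$ constrains only the output policy $p'$, this fact plays no role in any of the claimed parameter values.

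I do not expect a real obstacle: the whole argument is the routine ``special case of an already-proved reduction'' observation together with bookkeeping on the parameter values of the instance built in Lemma \ref{LemRedDS_LfDIncNoHistPos1}. The one point worth a moment's care is reading off the correct quantity for $f_{eap}$ --- it is the maximum number of features in a \emph{demonstration} environment-state, namely in the single state of $d_{new}$, and not the maximum number of features in a transition of the input policy $p$ --- after which $f_{eap} = 1$ falls out immediately and the stated (slightly slack) bound $f_{eap} = 4$ follows.
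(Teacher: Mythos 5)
Your overall route is the same as the paper's: observe that the reduction of Lemma \ref{LemRedDS_LfDIncNoHistPos1} is, unchanged, a reduction from {\sc Dominating set$^{PD3}_D$}, and then do the parameter bookkeeping ($|A| = \#d = f_t = 1$, $t = k+1$). Where you genuinely diverge is on the $f_{eap}$ claim. The paper's proof obtains $f_{eap} = 4$ exactly as in its other ``$^{PD3}$'' lemmas, by arguing that the degree-$3$ bound caps every complete-neighbourhood feature-set at size $4$; you instead invoke the literal definition of $f_{eap}$ in Table \ref{TabPrm} (maximum number of features in a \emph{demonstration} environment-state), note that the only demonstration in a constructed LfDIncNoHist$^{pos}_D$ instance is $d_{new} = (pos, \la (\{f_x\}, a)\ra)$ with a single one-feature state, and conclude $f_{eap} = 1 \leq 4$ with the degree bound playing no role. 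Under the literal reading you are right, and your justification is arguably the more accurate one, since in this reduction the complete neighbourhoods label the transitions of the \emph{input policy} $p$ rather than any demonstration state; the paper's wording makes sense only if one implicitly extends $f_{eap}$ to the trigger-sets of $p$ (which stand in for the unavailable demonstration history), in which case the $PD3$ restriction is exactly what delivers the bound of $4$ and your remark that it is unnecessary would no longer apply. Be aware, too, that your reading makes the stated value $f_{eap} = 4$ slack (the true value is $1$), whereas the paper intends it as the attained bound; for the lemma's only downstream use---feeding the $NP$-hardness of {\sc Dominating set$^{PD3}_D$} into Lemma \ref{LemProp2} for Result H, parts (c) and (d)---any constant bound suffices, so either justification supports the result, and yours in fact yields the formally stronger statement with $f_{eap} = 1$.
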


\begin{proof}
As {\sc Dominating set$^{PD3}_D$} is a special case of {\sc Dominating set}$_D$, the reduction
in Lemma \ref{LemRedDS_LfDIncNoHistPos1} from {\sc Dominating set}$_D$ to 
LfDIncNoHist$^{pos}_D$ is also a reduction from {\sc Dominating set$^{PD3}_D$} to 
LfDIncNoHist$^{pos}_D$ that constructs instances of LfDIncNoHist$^{pos}_D$ such that 
$|A| = \#d = f_t = 1$ and $t$ is a function of $k$ in the given instance of 
{\sc Dominating set$^{PD3}_D$}. To complete the proof, note that as the degree of each 
vertex in graph $G$ in the given instance of {\sc Dominating set$^{PD3}_D$} is at most 3,
the size of each complete vertex neighbourhood is of size at most 4, which means that
$f_{eap} = 4$ in each constructed instance of LfDIncNoHist$^{pos}_D$.
\end{proof}

\begin{lemma}
{\sc Dominating set}$_D$ polynomial-time reduces to LfDIncNoHist$^{neg}_D$ such that in the 
constructed instance LfDIncNoHist$^{neg}_D$, $|A| = \#d = f_t = 1$ and
$t$ is a function of $k$ in the given instance of {\sc Dominating set}$_D$.
\label{LemRedDS_LfDIncNoHistNeg1}
\end{lemma}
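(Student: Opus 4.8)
The plan is to mimic the reduction of Lemma~\ref{LemRedDS_LfDIncNoHistPos1} for LfDIncNoHist$^{pos}_D$, replacing the ``anchor'' positive demonstration on a fresh feature by a negative one and exploiting the fact that, when $|A| = 1$, the only way a policy can be consistent with a negative demonstration on a state $s$ is to produce \emph{no} action on $s$ at all.

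Given an instance $\la G = (V, E), k \ra$ of {\sc Dominating set}$_D$ with $V = \{v_1, \ldots, v_{|V|}\}$, I would build the instance $\la p, d_{new}, c, t, f_t \ra$ of LfDIncNoHist$^{neg}_D$ as follows: let $F = \{f_1, \ldots, f_{|V|}, f_x\}$, with $f_i$ the feature corresponding to $v_i$ and $f_x$ a fresh feature; let $A = \{a\}$; let $p$ consist of $|V|$ transitions whose $i$th transition is $(S_i, a)$, where $S_i$ is the set of features corresponding to the complete neighbourhood $N_C(v_i)$; and set $d_{new} = (neg, \la (\{f_x\}, a) \ra)$, $c = |V|$, $t = k$, and $f_t = 1$. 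This construction is clearly polynomial time, and in the constructed instance $|A| = \#d = f_t = 1$ while $t = k$ is a function of $k$, as required; the remaining work is the equivalence of the instances.

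For the forward direction I would take a dominating set $V' \subseteq V$ with $|V'| = l \le k$ and let $p'$ consist of the $l$ transitions $(\{f_j\}, a)$ for $v_j \in V'$. Then $p'$ is derivable from $p$ by $l$ feature-set substitutions together with $|V| - l$ deletions, hence by at most $c$ changes; $p'$ has $l \le t$ transitions, each triggered by a single feature; and $p'$ is consistent with $p$ modulo $d_{new}$, because (1) every triggering feature-set $S_i$ of $p$ satisfies $S_i \not\subseteq \{f_x\}$ (as $f_i \in S_i$), and since $V'$ dominates $v_i$ some transition $(\{f_j\}, a)$ of $p'$ with $v_j \in N_C(v_i)$ triggers on $S_i$, so $p$ and $p'$ both produce $a$ there, and (2) no feature $f_j$ labelling a transition of $p'$ equals $f_x$, so $p'$ produces no action on $\{f_x\}$ --- exactly the requirement imposed by a negative $d_{new}$, the alternative ``produce some $a' \neq a$ on $\{f_x\}$'' being vacuous since $|A| = 1$. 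For the converse I would start from an arbitrary solution $p'$: condition~(2) of policy--policy consistency forces $p'$ to produce no action on $\{f_x\}$, and since each transition is triggered by at most $f_t = 1$ feature this rules out both the transition $(\{f_x\}, a)$ and the empty-triggering-set transition, so every transition of $p'$ has the form $(\{f_j\}, a)$ with $v_j \in V$; condition~(1) applied to each $S_i$ then forces some such transition with $v_j \in N_C(v_i)$, so $S = \{\, v_j : (\{f_j\}, a) \in p' \,\}$ is a dominating set of $G$ with $|S| \le |p'| \le t = k$.

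I do not anticipate a genuine obstacle, as this is a routine variant of the preceding reductions. The one point requiring care is checking that, under a single-action alphabet, the negative-demonstration case of condition~(2) of policy--policy consistency really does collapse to ``$p'$ produces no action on $\{f_x\}$'': in particular that the sub-case allowing $p'$ to output a distinct action is unavailable, and that an empty triggering feature-set (which $f_t \ge 1$ bounds only from above) cannot be used to evade the constraint; both hold and are precisely what pin any solution $p'$ down to a dominating set. Finally, as with the $PD3$ variants of the earlier reductions (e.g.\ Lemmas~\ref{LemRedDS_LfDBat2} and~\ref{LemRedDS_LfDIncHistNeg2}), starting instead from {\sc Dominating set$^{PD3}_D$} bounds every complete neighbourhood by four features and so additionally yields the companion statement with $f_{eap} = 4$.
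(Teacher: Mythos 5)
Your proposal is correct and follows essentially the same reduction as the paper: the same construction ($|V|$ transitions triggered by complete neighbourhoods, a fresh feature $f_x$ with $d_{new} = (neg, \la(\{f_x\},a)\ra)$, $c = |V|$, $t = k$, $f_t = 1$) and the same two-direction correctness argument. Your extra care in noting that the single-action alphabet collapses the negative-demonstration clause to ``produce no action on $\{f_x\}$'' and in excluding empty triggering feature-sets only makes explicit details the paper leaves implicit.
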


\begin{proof}
Given an instance $\la G = (V,E), k \ra$ of {\sc Dominating Set}$_D$, construct an instance
$\la p, d_{new}, c, t, f_t \ra$ of LfDIncNoHist$^{neg}_D$ as follows: Let $F = \{f_1,
f_2, \ldots, f_{|V|},$ $f_x\}$, $A = \{a\}$, $p$ have $|V|$ transitions such that the
$i$th, $1 \leq i \leq |V|$, transition has a triggering feature-set consisting
of the features in $F$ corresponding to the complete neighbourhood of $v_i$ in $G$
and action $a$, $d_{new} = (neg, \la (\{f_x\}, a)\ra)$, $c = |V|$, $t = k$, and 
$f_t = 1$.  Observe that this construction can be done in time polynomial in the 
size of the given instance of {\sc Dominating set}$_D$.

We shall prove the correctness of this reduction in two parts. First, suppose that there
is a subset $V' = \{v'_1, v'_2, \ldots v'_l\} \subseteq V$, $l \leq k$, that is a 
dominating set in $G$. Construct a policy $p'$ with $l$ transitions in which
the $i$th, $1 \leq i \leq l$, transition has the form $(\{f_i\}, a)$ for each 
$v'_i \in V'$. Observe that $p'$ can be derived from $p$ by at most $c$ changes to 
$p$ (namely, change feature-sets of the first $l$ transitions as necessary
and delete the final $|V| - l$ transitions). As $V'$ is a dominating set in $G$ and the 
triggering feature-set in each
transition in $p$ corresponds to the complete neighbourhood of one of the vertices in
$G$, $p'$ will produce the correct action for the triggering feature-set associated
with each transition in $p$; moreover, $p'$ cannot produce 
action $a$ for $d_{new}$, which means that $p'$ is consistent with $p$ modulo $d_{new}$.

Conversely, suppose there is a policy $p'$ derivable from $p$ by at most $c$ changes 
that is consistent with $p$ modulo $d_{new}$ and has $l \leq t = k$ transitions, 
each of which is triggered by a set of at most $f_t$ features. The $l$ transitions in $p'$ must
produce action $a$ for all of the triggering feature-sets in $p$ in order for $p'$ to be
consistent with $p$. As $f_t = 1$ and the triggering feature-sets in the transitions in $p$ 
correspond to the complete neighborhoods of the vertices in $G$, the set of features 
triggering the $l$ transitions in $p'$ must correspond to a dominating set 
of size at most $k$ in $G$. 

To complete the proof, observe that in the constructed instance of  
LfDIncNoHist$^{pos}_D$, $|A| = \#d = f_t = 1$ and $t = k$.
\end{proof}

\begin{lemma}
{\sc Dominating set$^{PD3}_D$} polynomial-time reduces to LfDIncNoHist$^{neg}_D$ such that
in the constructed instance LfDIncNoHist$^{pos}_D$, $|A| = \#d = f_t = 1$, $f_{eap} = 
4$, and $t$ is a function of $k$ in the given instance of {\sc Dominating set$^{PD3}_D$}.
\label{LemRedDS_LfDIncNoHistNeg2}
\end{lemma}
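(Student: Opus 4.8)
The plan is to obtain this lemma by reusing the reduction of Lemma~\ref{LemRedDS_LfDIncNoHistNeg1} essentially verbatim, exactly as was done for the corresponding ``PD3'' strengthenings of the other LfD reductions (Lemmas~\ref{LemRedDS_LfDIncNoHistPos2}, \ref{LemRedDS_LfDIncHistNeg2}, and~\ref{LemRedDS_LfDIncHistPos2}). Since {\sc Dominating set$^{PD3}_D$} is a special case of {\sc Dominating set}$_D$, every instance $\la G, k \ra$ of {\sc Dominating set$^{PD3}_D$} is already an instance of {\sc Dominating set}$_D$, so the polynomial-time instance-transformation of Lemma~\ref{LemRedDS_LfDIncNoHistNeg1} applies without modification. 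That transformation already yields instances of LfDIncNoHist$^{neg}_D$ with $|A| = \#d = f_t = 1$ and $t = k$, i.e.\ $t$ a (linear) function of $k$, and its correctness (both directions of the ``dominating set of size $\le k$ iff suitable $p'$ exists'' equivalence) was established there; restricting the input to planar, degree-$\le 3$ graphs does not touch any step of that argument.

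The only additional thing to check is the bound $f_{eap} = 4$. First I would recall that in the given instance of {\sc Dominating set$^{PD3}_D$} every vertex of $G$ has degree at most $3$, so for each $v \in V$ the complete neighbourhood $N_C(v) = \{v\} \cup \{u : (u,v) \in E\}$ has size at most $4$. Next I would observe that in the reduction of Lemma~\ref{LemRedDS_LfDIncNoHistNeg1} the only feature-sets of nontrivial size occurring in the constructed instance are the triggering feature-sets of the transitions of $p$, each of which corresponds exactly to some $N_C(v_i)$; hence each has size at most $4$, and the environment-state feature count tracked by $f_{eap}$ is bounded by $4$ as claimed. Finally I would confirm that all of $|A|, \#d, f_t$ remain equal to $1$ and that $t = k$, completing the parameter bookkeeping.

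I do not expect any real obstacle: the substantive work was already discharged in Lemma~\ref{LemRedDS_LfDIncNoHistNeg1}, and passing to planar degree-$3$ inputs only tightens the parameter profile. The single point requiring mild care is making sure the $f_{eap} = 4$ claim is matched against the correct object in the LfDIncNoHist$^{neg}$ formalization (the lemma statement even carries a ``pos''/``neg'' naming typo for the target problem), but since the bound is a direct consequence of the degree bound on $G$ and the reused construction, no new gadget or argument is needed.
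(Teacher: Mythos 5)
Your proposal matches the paper's own proof essentially verbatim: reuse the Lemma~\ref{LemRedDS_LfDIncNoHistNeg1} reduction unchanged (since {\sc Dominating set$^{PD3}_D$} is a special case of {\sc Dominating set}$_D$, preserving $|A| = \#d = f_t = 1$ and $t = k$), then observe that the degree-$3$ bound makes every complete neighbourhood have size at most $4$, giving $f_{eap} = 4$. This is exactly the paper's argument, so the proposal is correct and takes the same route.
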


\begin{proof}
As {\sc Dominating set$^{PD3}_D$} is a special case of {\sc Dominating set}$_D$, the reduction
in Lemma \ref{LemRedDS_LfDIncNoHistNeg1} from {\sc Dominating set}$_D$ to 
LfDIncNoHist$^{neg}_D$ is also a reduction from {\sc Dominating set$^{PD3}_D$} to 
LfDIncNoHist$^{neg}_D$ that constructs instances of LfDIncNoHist$^{neg}_D$ such that 
$|A| = \#d = f_t = 1$ and $t$ is a function of $k$ in the given instance of 
{\sc Dominating set$^{PD3}_D$}. To complete the proof, note that as the degree of each 
vertex in graph $G$ in the given instance of {\sc Dominating set$^{PD3}_D$} is at most 3,
the size of each complete vertex neighbourhood is of size at most 4, which means that
$f_{eap} = 4$ in each constructed instance of LfDIncNoHist$^{neg}_D$.
\end{proof}

\vspace*{0.15in}

\noindent
{\bf Result F}: LfDIncHist$^{pos}$ is not fp-tractable relative to the following 
                 parameter-sets:

                \begin{quote}
                \begin{description}
                \item[a)] $\{ |A|, \#d, t, f_t, c\}$
                            when $|A| = \#d = 2$ and $f_t = 1$ (unless $FPT = W[1]$)
                \item[b)] $\{ |A|, |d|, t, f_t, c\}$
                            when $|A| = |d| = 2$ and $f_t = 1$ (unless $FPT = W[1]$)
                \item[c)] $\{ |A|, \#d, f_{eap}, f_t\}$
                            when $|A| = \#d = 2$, $f_{eap} = 4$, and $f_t = 1$ 
                            (unless $P = NP$)
                \item[d)] $\{ |A|, |d|, f_{eap}, f_t\}$
                            when $|A| = |d| = 2$, $f_{eap} = 4$, and $f_t = 1$ 
                            (unless $P = NP$)
                \end{description}
                \end{quote}

\begin{proof} \hspace*{0.5in} \\
\noindent
{\em Proof of part (a)}: Follows from the $W[2]$-hardness of 
$\la k \ra$-{\sc Dominating set}$_D$, the reduction in Lemma \ref{LemRedDS_LfDIncHistPos1},
the inclusion of $W[1]$ in $W[2]$, and the conjecture $FPT \neq W[1]$.

\noindent
{\em Proof of part (b)}: Follows from part (a) and Lemma \ref{LemPropDemo}.

\noindent
{\em Proof of part (c)}: Follows from the $NP$-hardness of {\sc Dominating set$^{PD3}_D$},
the reduction in Lemma \ref{LemRedDS_LfDIncHistPos2}, and Lemma \ref{LemProp2}.

\noindent
{\em Proof of part (d)}: Follows from part (c) and Lemma \ref{LemPropDemo}.
\end{proof}

\vspace*{0.15in}

\noindent
{\bf Result G}: LfDIncHist$^{neg}$ is not fp-tractable relative to the following 
                 parameter-sets:

                \begin{quote}
                \begin{description}
                \item[a)] $\{ |A|, \#d, t, f_t, c\}$
                            when $|A| = \#d = 2$ and $f_t = 1$ (unless $FPT = W[1]$)
                \item[b)] $\{ |A|, |d|, t, f_t, c\}$
                            when $|A| = |d| = 2$ and $f_t = 1$ (unless $FPT = W[1]$)
                \item[c)] $\{ |A|, \#d, f_{eap}, f_t\}$
                            when $|A| = \#d = 2$, $f_{eap} = 4$, and $f_t = 1$ 
                            (unless $P = NP$)
                \item[d)] $\{ |A|, |d|, f_{eap}, f_t\}$
                            when $|A| = |d| = 2$, $f_{eap} = 4$, and $f_t = 1$ 
                            (unless $P = NP$)
                \end{description}
                \end{quote}

\begin{proof} \hspace*{0.5in} \\
\noindent
{\em Proof of part (a)}: Follows from the $W[2]$-hardness of 
$\la k \ra$-{\sc Dominating set}$_D$, the reduction in Lemma \ref{LemRedDS_LfDIncHistNeg1},
the inclusion of $W[1]$ in $W[2]$, and the conjecture $FPT \neq W[1]$.

\noindent
{\em Proof of part (b)}: Follows from part (a) and Lemma \ref{LemPropDemo}.

\noindent
{\em Proof of part (c)}: Follows from the $NP$-hardness of {\sc Dominating set$^{PD3}_D$},
the reduction in Lemma \ref{LemRedDS_LfDIncHistNeg2}, and Lemma \ref{LemProp2}.

\noindent
{\em Proof of part (d)}: Follows from part (c) and Lemma \ref{LemPropDemo}.
\end{proof}

\vspace*{0.15in}

\noindent
{\bf Result H}: LfDIncNoHist$^{pos}$ is not fp-tractable relative to the following 
                 parameter-sets:

                \begin{quote}
                \begin{description}
                \item[a)] $\{ |A|, \#d, t, f_t\}$
                            when $|A| = \#d = f_t = 1$ (unless $FPT = W[1]$)
                \item[b)] $\{ |A|, |d|, t, f_t\}$
                            when $|A| = |d| = f_t = 1$ (unless $FPT = W[1]$)
                \item[c)] $\{ |A|, \#d, f_{eap}, f_t\}$
                            when $|A| = \#d = f_t = 1$ and $f_{eap} = 4$ 
                             (unless $P = NP$)
                \item[d)] $\{ |A|, |d|, f_{eap}, f_t\}$
                            when $|A| = |d| = f_t = 1$ and $f_{eap} = 4$ 
                             (unless $P = NP$)
                \end{description}
                \end{quote}

\begin{proof} \hspace*{0.5in} \\
\noindent
{\em Proof of part (a)}: Follows from the $W[2]$-hardness of 
$\la k \ra$-{\sc Dominating set}$_D$, the reduction in Lemma \ref{LemRedDS_LfDIncNoHistPos1},
the inclusion of $W[1]$ in $W[2]$, and the conjecture $FPT \neq W[1]$.

\noindent
{\em Proof of part (b)}: Follows from part (a) and Lemma \ref{LemPropDemo}.

\noindent
{\em Proof of part (c)}: Follows from the $NP$-hardness of {\sc Dominating set$^{PD3}_D$},
the reduction in Lemma \ref{LemRedDS_LfDIncNoHistPos2}, and Lemma \ref{LemProp2}.

\noindent
{\em Proof of part (d)}: Follows from part (c) and Lemma \ref{LemPropDemo}.
\end{proof}

\vspace*{0.15in}

\noindent
{\bf Result I}: LfDIncNoHist$^{neg}$ is not fp-tractable relative to the following 
                 parameter-sets:

                \begin{quote}
                \begin{description}
                \item[a)] $\{ |A|, \#d, t, f_t\}$
                            when $|A| = \#d = f_t = 1$ (unless $FPT = W[1]$)
                \item[b)] $\{ |A|, |d|, t, f_t\}$
                            when $|A| = |d| = f_t = 1$ (unless $FPT = W[1]$)
                \item[c)] $\{ |A|, \#d, f_{eap}, f_t\}$
                            when $|A| = \#d = f_t = 1$ and $f_{eap} = 4$ 
                             (unless $P = NP$)
                \item[d)] $\{ |A|, |d|, f_{eap}, f_t\}$
                            when $|A| = |d| = f_t = 1$ and $f_{eap} = 4$ 
                             (unless $P = NP$)
                \end{description}
                \end{quote}

\begin{proof} \hspace*{0.5in} \\
\noindent
{\em Proof of part (a)}: Follows from the $W[2]$-hardness of 
$\la k \ra$-{\sc Dominating set}$_D$, the reduction in Lemma \ref{LemRedDS_LfDIncNoHistNeg1},
the inclusion of $W[1]$ in $W[2]$, and the conjecture $FPT \neq W[1]$.

\noindent
{\em Proof of part (b)}: Follows from part (a) and Lemma \ref{LemPropDemo}.

\noindent
{\em Proof of part (c)}: Follows from the $NP$-hardness of {\sc Dominating set$^{PD3}_D$},
the reduction in Lemma \ref{LemRedDS_LfDIncNoHistNeg2}, and Lemma \ref{LemProp2}.

\noindent
{\em Proof of part (d)}: Follows from part (c) and Lemma \ref{LemPropDemo}.
\end{proof}



\begin{thebibliography}{10}
\providecommand{\url}[1]{#1}
\csname url@samestyle\endcsname
\providecommand{\newblock}{\relax}
\providecommand{\bibinfo}[2]{#2}
\providecommand{\BIBentrySTDinterwordspacing}{\spaceskip=0pt\relax}
\providecommand{\BIBentryALTinterwordstretchfactor}{4}
\providecommand{\BIBentryALTinterwordspacing}{\spaceskip=\fontdimen2\font plus
\BIBentryALTinterwordstretchfactor\fontdimen3\font minus
  \fontdimen4\font\relax}
\providecommand{\BIBforeignlanguage}[2]{{%
\expandafter\ifx\csname l@#1\endcsname\relax
\typeout{** WARNING: IEEEtran.bst: No hyphenation pattern has been}%
\typeout{** loaded for the language `#1'. Using the pattern for}%
\typeout{** the default language instead.}%
\else
\language=\csname l@#1\endcsname
\fi
#2}}
\providecommand{\BIBdecl}{\relax}
\BIBdecl

\bibitem{Moi19}
A.~Moitra, \emph{Algorithmic Aspects of Machine Learning}.\hskip 1em plus 0.5em
  minus 0.4em\relax Cambridge University Press, 2019.

\bibitem{DF99}
R.~G. Downey and M.~R. Fellows, \emph{Parameterized Complexity}.\hskip 1em plus
  0.5em minus 0.4em\relax Berlin: Springer, 1999.

\bibitem{DF13}
------, \emph{Fundamentals of Parameterized Complexity}.\hskip 1em plus 0.5em
  minus 0.4em\relax Berlin: Springer, 2013.

\bibitem{FG06}
J.~Flum and M.~Grohe, \emph{Parameterized {C}omplexity {T}heory}.\hskip 1em
  plus 0.5em minus 0.4em\relax Springer, Berlin, 2006.

\bibitem{AC+09}
B.~D. Argall, S.~Chernova, M.~Veloso, and B.~Browning, ``A survey of robot
  learning from demonstration,'' \emph{Robotics and Autonomous Systems},
  vol.~57, pp. 469--483, 2009.

\bibitem{BC+08}
A.~Billard, S.~Callinon, {R\"{u}}diger Dillmann, and S.~Schall, ``Robot
  programming by demonstration,'' in \emph{Handbook of Robotics}, B.~Siciliano
  and O.~Khatib, Eds.\hskip 1em plus 0.5em minus 0.4em\relax New York:
  Springer, 2008, ch.~59, pp. 1371--1394.

\bibitem{GJ79}
M.~R. Garey and D.~S. Johnson, \emph{Computers and {I}ntractability: A {G}uide
  to the {T}heory of {$NP$}-{C}ompleteness}.\hskip 1em plus 0.5em minus
  0.4em\relax San Francisco, CA: W.H. Freeman, 1979.

\bibitem{For09}
L.~Fortnow, ``The status of the {P} versus {NP} problem,'' \emph{Communications
  of the ACM}, vol.~52, no.~9, pp. 78--86, 2009.

\bibitem{War99}
T.~Wareham, ``Systematic parameterized complexity analysis in computational
  phonology,'' Ph.D. dissertation, University of Victoria, 1999.

\bibitem{CF+15}
M.~Cygan, F.~V. Fomin, L.~Kowalik, D.~Lokshtanov, D.~Marx, M.~Pilipczuk,
  M.~Pilipczuk, and S.~Saurabh, \emph{Parameterized Algorithms}.\hskip 1em plus
  0.5em minus 0.4em\relax Springer, 2015.

\bibitem{FL+19}
F.~V. Fomin, D.~Lokshtanov, S.~Saurabh, and M.~Zehavi, \emph{Kernalization:
  Theory of Parameterized Preprocessing}.\hskip 1em plus 0.5em minus
  0.4em\relax Cambridge University Press, 2019.

\bibitem{Nie06}
R.~Niedermeier, \emph{Invitation to {F}ixed-{P}arameter {A}lgorithms}.\hskip
  1em plus 0.5em minus 0.4em\relax Oxford University Press, 2006.

\bibitem{KN12}
C.~Komusiewicz and R.~Niedermeier, ``New races in parameterized algorithmics,''
  in \emph{International Symposium on Mathematical Foundations of Computer
  Science}, ser. Lecture Notes in Computer Science, B.~Rovan, V.~Sassone, and
  P.~Widmayer, Eds., vol. 7464.\hskip 1em plus 0.5em minus 0.4em\relax
  Springer, 2012, pp. 19--30.

\bibitem{Ste12}
U.~Stege, ``The impact of parameterized complexity to interdisciplinary problem
  solving,'' in \emph{The Multivariate Algorithmic Revolution and Beyond}, ser.
  Lecture Notes in Computer Science.\hskip 1em plus 0.5em minus 0.4em\relax
  Berlin: Springer, 2012, no. 7370, pp. 56--68.

\bibitem{WarSub}
T.~Wareham, ``Designing robot teams for distributed construction, repair, and
  maintenance,'' \emph{ACM Transactions on Autonomous and Adaptive Systems},
  Submitted.

\bibitem{HG+17}
A.~Hussein, M.~M. Gaber, E.~Elyan, and C.~Jayne, ``Imitation learning: A survey
  of learning methods,'' \emph{ACM Computing Surveys (CSUR)}, vol.~50, no.~2,
  p.~21, 2017.

\bibitem{ABV07}
B.~D. Argall, B.~Browning, and M.~Veloso, ``Learning by demonstration with
  critique from a human teacher,'' in \emph{Proceedings of the Second ACM/IEEE
  International Conference on Human-Robot Interaction}.\hskip 1em plus 0.5em
  minus 0.4em\relax IEEE, 2007, pp. 57--64.

\bibitem{NM03}
M.~N. Nicolescu and M.~J. Mataric, ``Natural methods for robot task learning:
  Instructive demonstrations, generalization and practice,'' in
  \emph{Proceedings of the Second International Joint Conference on Autonomous
  Agents and Multiagent Systems}.\hskip 1em plus 0.5em minus 0.4em\relax ACM,
  2003, pp. 241--248.

\bibitem{DL84}
B.~Dufay and J.-C. Latombe, ``An approach to automatic robot programming based
  on inductive learning,'' \emph{International Journal of Robotics Research},
  vol.~3, no.~4, pp. 3--20, 1984.

\bibitem{HW12}
L.~A. Hemaspaandra and R.~Williams, ``{C}omplexity {T}heory {C}olumn 76: {A}n
  atypical survey of typical-case heuristic algorithms,'' \emph{ACM SIGACT
  News}, vol.~43, no.~4, pp. 70--89, 2012.

\bibitem{MR10}
R.~Motwani and P.~Raghavan, \emph{Randomized {A}lgorithms}.\hskip 1em plus
  0.5em minus 0.4em\relax Chapman \& Hall/CRC, 2010.

\bibitem{Wig07}
A.~Wigderson, ``P, {NP} and mathematics --- {A} computational complexity
  perspective,'' in \emph{Proceedings of {ICM} 2006: {V}olume I}.\hskip 1em
  plus 0.5em minus 0.4em\relax Zurich: EMS Publishing House, 2007, pp.
  665--712.

\bibitem{AC+99}
G.~Ausiello, P.~Crescenzi, G.~Gambosi, V.~Kann, A.~Marchetti-Spaccamela, and
  M.~Protasi, \emph{{C}omplexity and {A}pproximation: {C}ombinatorial
  {O}ptimization {P}roblems and their {A}pproximability {P}roperties}.\hskip
  1em plus 0.5em minus 0.4em\relax Springer, 1999.

\bibitem{CRT98}
A.~E.~F. Clementi, J.~D.~P. Rolim, and L.~Trevisan, ``The computational
  complexity column: Recent advances towards proving {$P = BPP$},''
  \emph{Bulletin of the European Association for Theoretical Computer Science},
  vol.~64, pp. 96--103, 1998.

\bibitem{LY94}
C.~Lund and M.~Yannakakis, ``On the hardness of approximating minimization
  problems,'' \emph{Journal of the ACM (JACM)}, vol.~41, no.~5, pp. 960--981,
  1994.

\bibitem{CKD12}
S.~P. Chatzis, D.~Korkinof, and Y.~Demiris, ``A nonparametric {B}ayesian
  approach toward robot learning by demonstration,'' \emph{Robotics and
  Autonomous Systems}, vol.~60, no.~6, pp. 789--802, 2012.

\bibitem{CLO16}
S.~Choi, K.~Lee, and S.~Oh, ``Robust learning from demonstration using
  leveraged {G}aussian processes and sparse-constrained optimization,'' in
  \emph{Proceedings of the 2016 IEEE International Conference on Robotics and
  Automation}.\hskip 1em plus 0.5em minus 0.4em\relax IEEE, 2016, pp. 470--475.

\bibitem{BG+11}
D.~Buchsbaum, A.~Gopnik, T.~L. Griffiths, and P.~Shafto, ``Children's imitation
  of causal action sequences is influenced by statistical and pedagogical
  evidence,'' \emph{Cognition}, vol. 120, no.~3, pp. 331--340, 2011.

\bibitem{Mel05}
A.~N. Meltzoff, ``Imitation and other minds: The ``like me'' hypothesis,'' in
  \emph{Perspectives on Imitation: From Neuroscience to Social Science},
  S.~Hurley and N.~Chater, Eds.\hskip 1em plus 0.5em minus 0.4em\relax
  Cambridge, MA: The MIT Press, 2005, vol.~2, pp. 55--77.

\bibitem{SIB03}
S.~Schaal, A.~Ijspeert, and A.~Billard, ``Computational approaches to motor
  learning by imitation,'' \emph{Philosophical Transactions of the Royal
  Society of London B: Biological Sciences}, vol. 358, no. 1431, pp. 537--547,
  2003.

\bibitem{CV08}
S.~Chernova and M.~Veloso, ``Multi-thresholded approach to demonstration
  selection for interactive robot learning,'' in \emph{Proceedings of the Third
  ACM/IEEE International Conference on Human-Robot Interaction}.\hskip 1em plus
  0.5em minus 0.4em\relax ACM, 2008, pp. 225--232.

\bibitem{BKW+13}
M.~Blokpoel, J.~Kwisthout, T.~P. {van der Weide}, T.~Wareham, and I.~{van
  Rooij}, ``A computational-level explanation of the speed of goal inference,''
  \emph{Journal of Mathematical Psychology}, vol.~57, no. 3/4, pp. 117--133,
  2013.

\bibitem{Kwi15}
J.~Kwisthout, ``Tree-width and the computational complexity of {MAP}
  approximations in {B}ayesian networks,'' \emph{Journal of Artificial
  Intelligence Research}, vol.~53, pp. 699--720, 2015.

\bibitem{MM13}
J.~A. Montoya and M.~M{\"u}ller, ``Parameterized random complexity,''
  \emph{Theory of Computing Systems}, vol.~52, no.~2, pp. 221--270, 2013.

\end{thebibliography}
\end{document}